\def\eqref#1{equation~\ref{#1}}
\def\1{\bm{1}}
\def\vzero{{\bm{0}}}
\def\vu{{\bm{u}}}
\def\vv{{\bm{v}}}
\def\vx{{\bm{x}}}
\def\vy{{\bm{y}}}
\def\vz{{\bm{z}}}
\def\mI{{\bm{I}}}
\DeclareMathAlphabet{\mathsfit}{\encodingdefault}{\sfdefault}{m}{sl}
\SetMathAlphabet{\mathsfit}{bold}{\encodingdefault}{\sfdefault}{bx}{n}
\def\sD{{\mathbb{D}}}
\def\sR{{\mathbb{R}}}
\newcommand{\E}{\mathbb{E}}
\newcommand{\z}{\mathbf{z}}
\newcommand{\N}{\mathcal{N}}
\newcommand{\U}{\mathcal{U}}
\newcommand{\tp}{p_{\alpha}}
\newcommand{\pd}{p_\mathrm{data}}
\newcommand{\pzero}{p_0}
\newcommand{\psigma}{p_{\sigma}}
\newcommand{\ptau}{p_{\tau}}
\newcommand{\pjoint}{p_{\sigma,\tau}}
\newcommand{\vtx}{{\tilde\vx}}
\newcommand{\vty}{{\tilde\vy}}
\newcommand{\xdim}{{d}}
\newcommand{\ydim}{{c}}
\newcommand{\scalingfactor}{{\alpha}}
\newcommand{\scoremodel}{{s}}
\newtheorem{theorem}{Theorem}
\newtheorem{prop}{Proposition}
\title{Denoising Likelihood Score Matching \\for Conditional Score-based Data Generation}
\author{Chen-Hao Chao\thanks{Work done during an internship at Mediatek Inc. E-mail: \texttt{lance\_chao@gapp.nthu.edu.tw}} \\
National Tsing Hua University \\
\And Wei-Fang Sun \\
National Tsing Hua University \\
\And Bo-Wun Cheng \\
National Tsing Hua University \\
\And Yi-Chen Lo \\
Mediatek Inc. \\
\And Chia-Che Chang \\
Mediatek Inc.\\
\And Yu-Lun Liu \\
Mediatek Inc.\\
\And Yu-Lin Chang \\
Mediatek Inc. \\
\And Chia-Ping Chen \\
Mediatek Inc.\\
\And Chun-Yi Lee\thanks{Corresponding author. E-mail: \texttt{cylee@gapp.nthu.edu.tw}} \\
National Tsing Hua University
}
\begin{document}

\maketitle

\begin{abstract}
Many existing conditional score-based data generation methods utilize Bayes' theorem to decompose the gradients of a log posterior density into a mixture of scores. These methods facilitate the training procedure of conditional score models, as a mixture of scores can be separately estimated using a score model and a classifier. However, our analysis indicates that the training objectives for the classifier in these methods may lead to a serious \textit{score mismatch issue}, which corresponds to the situation that the estimated scores deviate from the true ones. Such an issue causes the samples to be misled by the deviated scores during the diffusion process, resulting in a degraded sampling quality. To resolve it, we formulate a novel training objective, called \textit{Denoising Likelihood Score Matching} (DLSM) loss, for the classifier to match the gradients of the true log likelihood density. Our experimental evidence shows that the proposed method outperforms the previous methods on both Cifar-10 and Cifar-100 benchmarks noticeably in terms of several key evaluation metrics. We thus conclude that, by adopting DLSM, the conditional scores can be accurately modeled, and the effect of the score mismatch issue is alleviated.
\end{abstract}

\section{Introduction}
\label{sec:introduction}
Score-based generative models are probabilistic generative models that estimate score functions, i.e., the gradients of the log density for some given data distributions. As described in the pioneering work~\citep{JMLR:v6:hyvarinen05a}, the process of training score-based generative models is called \textit{Score Matching} (SM), in which a score-based generative model is iteratively updated to approximate the true score function. Such a process often incurs heavy computational burdens, since the calculation of the score-matching objective involves the explicit computation of the partial derivatives of the score model during training. Therefore, a branch of study in this research domain~\citep{vincent2011connection, song2019sliced} resorts to reformulating the score-matching objective to reduce the training cost. Among these works, the author in~\citep{vincent2011connection} introduced the \textit{Denoising Score-Matching} (DSM) method. This method facilitates the training process of score-based generative models, and thus lays the foundation for a number of subsequent researches. Recently, the authors in~\citep{song2019generative} proposed an unified framework based on DSM, and achieved remarkable performance on several real-world datasets. Their success inspired several succeeding works~\citep{song2020improved, Ho2020DenoisingDP, Song2021DenoisingDI, song2021scorebased, dhariwal2021diffusion}, which together contribute to making score-based generative models an attractive choice for contemporary image generation tasks.

A favorable aspect of score-based generative models is their flexibility to be easily extended to conditional variants. This characteristic comes from a research direction that utilizes Bayes' theorem to decompose a conditional score into a mixture of scores~\citep{nguyen2017plug}. Recent endeavors followed this approach and further extended the concept of conditional score-based models to a number of application domains, including colorization~\citep{song2021scorebased}, inpainting~\citep{song2021scorebased}, and source separation~\citep{Jayaram2020SourceSW}. In particular, some recent researchers~\citep{song2021scorebased, dhariwal2021diffusion} applied this method to the field of class-conditional image generation tasks, and proposed the \textit{classifier-guidance} method. Different from the \textit{classifier-guidance-free} method adopted by~\citep{Ho2021CascadedDM}, they utilized a score model and a classifier to generate the posterior scores (i.e., the gradients of the log posterior density), with which the data samples of certain classes can be generated through the diffusion process. The authors in~\citep{dhariwal2021diffusion} showed that the classifier guidance method is able to achieve improved performance on large image generation benchmarks. In spite of their success, our analysis indicates that the conditional generation methods utilizing a score model and a classifier may suffer from a \textit{score mismatch issue}, which is the situation that the estimated posterior scores deviate from the true ones. This issue causes the samples to be guided by inaccurate scores during the diffusion process, and may result in a degraded sampling quality consequently. 

To resolve this problem, we first analyze the potential causes for the score mismatch issue through a motivational low-dimensional example. Then, we formulate a new loss function called \textit{Denoising Likelihood Score-Matching} (DLSM) loss, and explain how it can be integrated into the current training method. Finally, we evaluate the proposed method under various configurations, and demonstrate its advantages in improving the sampling quality over the previous methods in terms of several evaluation metrics.
\section{Background}
\label{sec:background}
In this section, we introduce the essential background material for understanding the contents of this paper. We first introduce Langevin diffusion~\citep{bj/1178291835, roberts1998optimal} for generating data samples $\vtx\in\sR^\xdim$ of a certain dimension $d$ from an unknown probability density function (pdf) $p(\vtx)$ using the score function $\nabla_\vtx \log p(\vtx)$. Next, we describe Parzen density estimation~\citep{parzen1962estimation} and denoising score matching (DSM)~\citep{vincent2011connection} for approximating $\nabla_\vtx \log p(\vtx)$ with limited data samples. Finally, we elaborate on the conditional variant of the score function, i.e., $\nabla_\vtx \log p(\vtx|\vty)$, and explains how it can be decomposed into $\nabla_\vtx \log p(\vtx)$ and $\nabla_\vtx \log p(\vty|\vtx)$ for some conditional variable $\vty\in\sR^\ydim$ of dimension $c$.

\subsection{Langevin Diffusion}
\label{sec:langevin_dynamics}

Langevin diffusion~\citep{bj/1178291835, roberts1998optimal} can be used to generate data samples from an unknown data distribution $p(\vtx)$ using only the score function $\nabla_\vtx\log p(\vtx)$, which is said to be well-defined if $p(\vtx)$ is everywhere non-zero and differentiable. Under the condition that $\nabla_\vtx\log p(\vtx)$ is well-defined, Langevin diffusion enables $p(\vtx)$ to be approximated iteratively based on the following equation:
\begin{equation} 
\label{eq:langevin_dynamics}
\begin{aligned}
\vtx_t = \vtx_{t-1} + \frac{\epsilon^2}{2} \nabla_\vtx\log p(\vtx_{t-1}) + \epsilon\vz_t,
\end{aligned}
\end{equation}
where $\vtx_0$ is sampled from an arbitrary distribution, $\epsilon$ is a fixed positive step size, and $\vz_t$ is a noise vector sampled from a normal distribution $\N(\vzero,I_{\xdim\times\xdim})$ for simulating a $\xdim$-dimensional standard Brownian motion. Under suitable regularity conditions, when $\epsilon\rightarrow 0$ and $T\rightarrow \infty$, $\vtx_T$ is generated as if it is directly sampled from  $p(\vtx)$~\citep{bj/1178291835,welling2011bayesian}. In practice, however, the data samples are generated with $\epsilon>0$ and $T<\infty$, which violates the convergence guarantee. Although it is possible to use Metropolized algorithms~\citep{bj/1178291835, roberts1998optimal} to recover the convergence guarantee, we follow the assumption of the prior work~\citep{song2019generative} and presume that the errors are sufficiently small to be negligible when $\epsilon$ is small and $T$ is large. 

The sampling process introduced in Eq.~(\ref{eq:langevin_dynamics}) can be extended to a time-inhomogeneous variant by making $p(\vtx_{t})$ and $\epsilon$ dependent on $t$ (i.e., $p_t(\vtx_{t})$ and $\epsilon_t$). Such a time-inhomogeneous variant is commonly adopted by recent works on score-based generative models~\citep{song2019generative, song2021scorebased}, as it provides flexibility in controlling $p_t(\vtx_{t})$ and $\epsilon_t$. The experimental results in~\citep{song2019generative, song2021scorebased} demonstrated that such a time-inhomogeneous sampling process can improve the sampling quality on real-world datasets. In Appendix~\ref{apx:config:noise_and_sampling}, we offer a discussion on the detailed implementation of such a time-inhomogeneous sampling process in this work.

\subsection{Parzen Density Estimation}
\label{sec:parzen_density_estimation}

Given a true data distribution $\pd$, the empirical data distribution $\pzero(\vx)$ is constructed by sampling $m$ independent and identically distributed data points $\{\vx^{(i)}\}_{i=1}^m$, and can be represented as a sum of Dirac functions $\frac{1}{m}\sum_{i=1}^m\delta(\lVert\vx-\vx^{(i)}\rVert)$. Such a discrete data distribution $\pzero(\vx)$
constructed from the dataset often violates the previous assumptions that everywhere is non-zero and is differentiable. Therefore, it is necessary to somehow adjust the empirical data distribution $\pzero(\vx)$ before applying Langevin diffusion in such cases.

To deal with the above issue, a previous literature~\citep{vincent2011connection} utilized Parzen density estimation to replace the Dirac functions with isotropic Gaussian smoothing kernels $\psigma(\vtx|\vx)=\frac{1}{(2\pi)^{\xdim/2}\sigma^\xdim}e^{\frac{-1}{2\sigma^2}\lVert\vtx-\vx\rVert^2}$ with variance $\sigma^2$. Specifically, Parzen density estimation enables the calculation of $\psigma(\vtx)=\frac{1}{m}\sum_{i=1}^m\psigma(\vtx|\vx^{(i)})$. When $\sigma>0$, the score function becomes well-defined and can thus be represented as the following: 
\begin{equation}
\label{eq:oracle}
\begin{aligned}
\nabla_\vtx\log\psigma(\vtx) = \frac{\sum_{i=1}^{m}\frac{1}{\sigma^2}(\vx^{(i)}-\vtx)\psigma(\vtx|\vx^{(i)})}{\sum_{i=1}^{m}\psigma(\vtx|\vx^{(i)})}.
\end{aligned}
\end{equation}
The proof for Eq.~(\ref{eq:oracle}) is provided in Appendix~\ref{apx:proof:parzen}. This equation can be directly applied to Eq.~(\ref{eq:langevin_dynamics}) to generate samples with Langevin diffusion. Unfortunately, this requires summation over all $m$ data points during every iteration, preventing it from scaling to large datasets due to the rapid growth in computational complexity.

\subsection{Denoising Score Matching}
\label{sec:score_matching}

Score matching (SM)~\citep{JMLR:v6:hyvarinen05a} was proposed to estimate the score function with a model $\scoremodel(\vtx;\phi)$, parameterized by $\phi$. Given a trained score model $\scoremodel(\vtx;\phi)$, the scores can be generated by a single forward pass, which reduces the computational complexity of Eq.~(\ref{eq:oracle}) by a factor of $m$. To train such a score model, a straightforward approach is to use the Explicit Score-Matching (ESM) loss $L_{\mathrm{ESM}}$, represented as:
\begin{equation} 
\label{eq:esm}
\begin{aligned}
L_{\mathrm{ESM}}(\phi) = \E_{\psigma(\vtx)} \left[ \frac{1}{2}\lVert\scoremodel(\vtx;\phi) - \nabla_\vtx\log\psigma(\vtx)\rVert^2 \right].
\end{aligned}
\end{equation}
This objective requires evaluating Eq.~(\ref{eq:oracle}) for each training step, which also fails to scale well to large datasets. Based on Parzen density estimation, an efficient alternative, called Denoising Score-Matching (DSM) loss~\citep{vincent2011connection}, is proposed to efficiently calculate the equivalent loss $L_{\mathrm{DSM}}$, expressed as:
\begin{equation} 
\label{eq:dsm}
\begin{aligned}
L_{\mathrm{DSM}}(\phi) = \E_{\psigma(\vtx,\vx)} \left[ \frac{1}{2}\lVert\scoremodel(\vtx;\phi) - \nabla_\vtx\log\psigma(\vtx|\vx)\rVert^2 \right].
\end{aligned}
\end{equation}
where $\nabla_\vtx\log\psigma(\vtx|\vx)$ is simply $\frac{1}{\sigma^2}(\vx-\vtx)$. Since the computational cost of denoising score matching is relatively lower in comparison to other reformulation techniques~\citep{JMLR:v6:hyvarinen05a, song2019sliced}, it is extensively adopted in recent score-based generative models~\citep{song2019generative, song2020improved, song2021scorebased}.

\subsection{Conditional Score Decomposition via Bayes' Theorem}
\label{sec:conditional_sampling}

Score models can be extended to conditional models when conditioned on a certain label $\vty$. Similar to $\vtx$, the smoothing kernels with variance $\tau^2$ can be applied on $\vy$ to meet the requirement that the pdf is everywhere non-zero. Typically, $\tau$ is assumed to be sufficiently small so that $\ptau(\vty) \approx p(\vy)$. A popular approach adopted by researchers utilizes Bayes' theorem $\pjoint(\vtx|\vty)=\pjoint(\vty|\vtx)\psigma(\vtx)/\ptau(\vty)$ to decompose the conditional score $\nabla_\vtx\log \pjoint(\vtx|\vty)$ into a mixture of scores~\citep{nguyen2017plug}, which enables conditional data generation. Following the assumptions in the previous study~\citep{song2021scorebased}, the decomposition can be achieved by taking the log-gradient on both sides of the equation, expressed as follows:
\begin{equation} 
\label{eq:bayesian}
\begin{aligned}
\nabla_\vtx\log \pjoint(\vtx|\vty) = \nabla_\vtx\log \pjoint(\vty|\vtx) + \nabla_\vtx\log \psigma(\vtx) - \underbrace{\nabla_\vtx\log \ptau(\vty)}_{=0},
\end{aligned}
\end{equation}
where $\nabla_\vtx\log \pjoint(\vtx|\vty)$ is the posterior score, $\nabla_\vtx\log \pjoint(\vty|\vtx)$ is the likelihood score, and $\nabla_\vtx\log \psigma(\vtx)$ is the prior score. Base on Eq.~(\ref{eq:bayesian}), a conditional score model $\nabla_\vtx\log p(\vtx|\vty;\phi,\theta)$ can be
represented as the combination of the log-gradient of a differentiable classifier $p(\vty|\vtx;\theta)$ and a prior score model $\scoremodel(\vtx;\phi)$:
\begin{equation} 
\label{eq:bayesian2}
\begin{aligned}
\nabla_\vtx\log p(\vtx|\vty;\theta,\phi) = \nabla_\vtx\log p(\vty|\vtx;\theta) + \scoremodel(\vtx;\phi).
\end{aligned}
\end{equation}
This formulation enables conditional data generation using a classifier $p(\vty|\vtx;\theta)$ trained with cross-entropy loss $L_\mathrm{CE}(\theta)\triangleq \E_{\pjoint(\vtx,\vty)}[-\log p(\vty|\vtx;\theta)]$, and a score model $\scoremodel(\vtx;\phi)$ trained with denoising score matching loss $L_\mathrm{DSM}(\phi)$. Unfortunately, a few previous studies~\citep{nguyen2017plug, dhariwal2021diffusion} have noticed that the approximation of $\nabla_\vtx\log p(\vtx|\vty;\theta,\phi)$ is empirically inaccurate (referred to as the score mismatch issue in this work), and leveraged a scaling factor $\scalingfactor>0$ to adjust the likelihood score $\nabla_\vtx\log p(\vty|\vtx;\theta)$. Such a scaling factor is a hyperparameter that empirically controls the amount of conditional information incorporated during the sampling process (see Appendix~\ref{apx:scaling_trick}). However, this usually causes the diversity of generated data samples to degrade noticeably~\citep{dhariwal2021diffusion}, which is later discussed in Section~\ref{sec:experiments:benchmarks}.

In this work, we examine and investigate the score mismatch issue from a different perspective. We first offer a motivational example to show that the issue remains even with the scaling technique. Then, we theoretically formulate a new loss term that enables a classifier $p(\vty|\vtx;\theta)$ to produce better likelihood score estimation.

\section{Analysis on the Score Mismatch Issue}
\label{sec:analysis}
\begin{figure}[t]
    \centering
    \includegraphics[width=\linewidth]{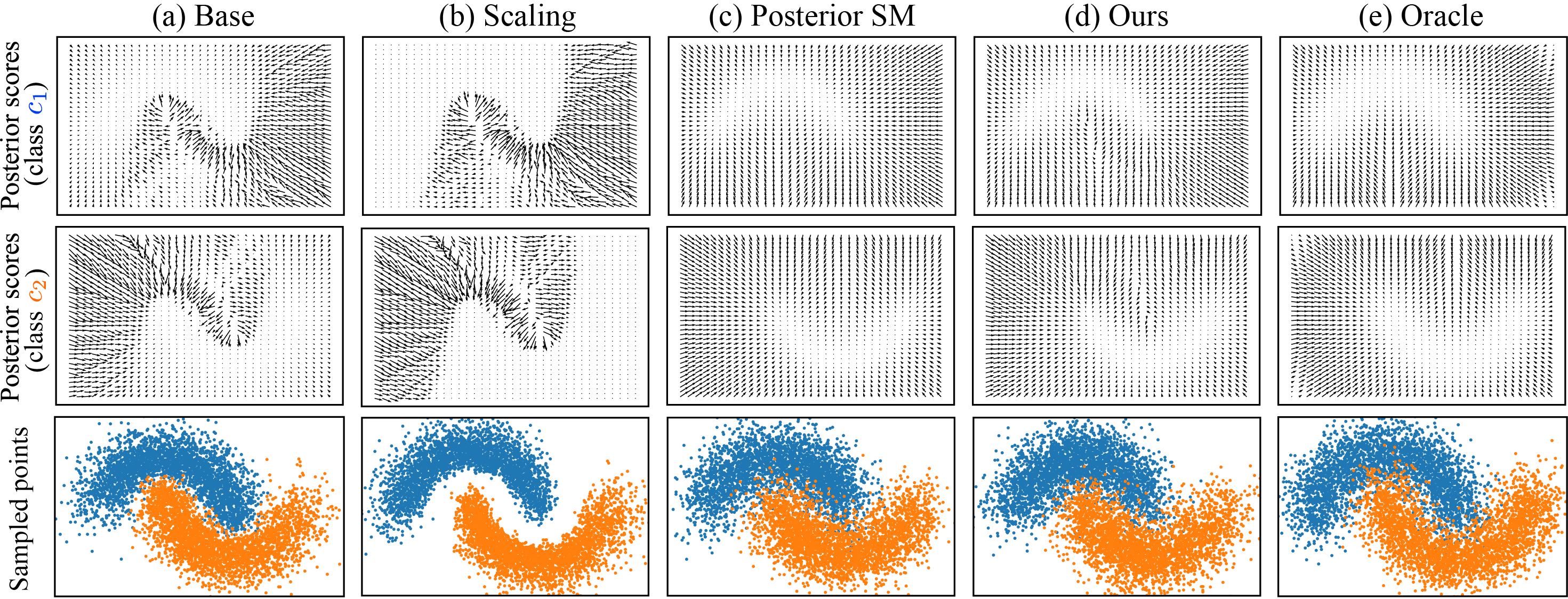}
    \caption{The visualized results on the \textit{inter-twining moon} dataset. The plots presented in the first two rows correspond to the visualized vector fields for the posterior scores of the class \textbf{\textcolor{blue}{$c_1$}} (upper crescent) and \textbf{\textcolor{orange}{$c_2$}} (lower crescent), respectively. The plots in the third row are the sampled points. Different columns correspond to different experimental settings in Section~\ref{sec:analysis}. The detailed configurations are presented in Appendix~\ref{apx:config:toy_experiments}.}
    \label{fig:toy_experiment}
\end{figure}

To further investigate the score mismatch issue, we first leverage a motivational experiment on the \textit{inter-twining moon} dataset to examine the extent of the discrepancy between the estimated and true posterior scores. In this experiment, we consider five different methods to calculate the posterior scores denoted as (a)$\sim$(e):
\begin{itemize}
	\item \textbf{(a) Base method.}~The posterior scores are estimated using a score model $\scoremodel(\vtx;\phi)$ trained with $L_\mathrm{DSM}$ and a classifier $p(\vty|\vtx;\theta)$ trained with $L_\mathrm{CE}$, as described in Eq.~(\ref{eq:bayesian2}).
	\item \textbf{(b) Scaling method.}~The posterior scores are estimated in a similar fashion as method (a) except that the scaling technique~\citep{dhariwal2021diffusion} is adopted to scale the likelihood scores, i.e., $\nabla_\vtx\log p(\vtx|\vty;\theta,\phi) = \scalingfactor \nabla_\vtx\log p(\vty|\vtx;\theta) + \scoremodel(\vtx;\phi)$, where $\scalingfactor>0$.
	\item \textbf{(c) Posterior SM method.}~The posterior scores for different class conditions, in this case, $c_1$ and $c_2$, are separately estimated using different score models $\scoremodel(\vtx;\phi_1)$ and $\scoremodel(\vtx;\phi_2)$ trained with $L_\mathrm{DSM}$. Note that, since this method requires network architectures different from those used in methods (a), (b), and (d), it is only adopted for analytical purposes. For more detailed discussions about this method, please refer to Appendix~\ref{apx:conditional_frameworks}.
	\item \textbf{(d) Ours.}~The posterior scores are estimated in a similar fashion as method (a) except that the classifier is trained with the proposed loss function $L_\mathrm{Total}$, which is described in detail in Section~\ref{sec:likelihood_denoising_score_matcing}.
	\item \textbf{(e) Oracle.}~The oracle posterior scores are directly computed using the conditional variant of Eq.~(\ref{eq:oracle}), which is detailed in Appendix~\ref{apx:config:toy_experiments}.
\end{itemize}
Fig.~\ref{fig:toy_experiment} visualizes the posterior scores and the sampled points based on the five methods. It is observed that the posterior scores estimated using methods (a) and (b) are significantly different from the true posterior scores measured by method (e). This causes the sampled points in methods (a) and (b) to deviate from those sampled based on method (e). On the other hand, the estimated posterior scores and the sampled points in method (c) are relatively similar to those in method (e). The above results therefore suggest that the score mismatch issue is severe under the cases of methods (a) and (b), but is alleviated when method (c) is used. 

\begin{table*}[t]
\renewcommand{\arraystretch}{1.5}
\newcommand{\boldtoprule}{\toprule[2.7pt]}
\centering
\huge
\caption{The experimental results on the inter-twining moon dataset. The quality of the sampled data for different methods are measured in terms of the precision and recall metrics. The errors of the score functions for different methods are measured using $\E_{U(\vtx)}[D_{P}(\vtx,\vty)]$ and $\E_{U(\vtx)}[D_{L}(\vtx,\vty)]$, where $U(\vtx)$ is a uniform distribution over a two-dimensional subspace (i.e., the space of each subplot in Fig.~\ref{fig:toy_experiment}), $\vty$ represents the classes specified in the second row of the table, and $D_P$ and $D_L$ are defined in Eq.~(\ref{eq:dp}). The configurations and the hyperparameters for different experimental settings of this experiment are presented in Appendix~\ref{apx:config:toy_experiments}. }
\resizebox{\linewidth}{!}{%
\begin{tabular}{c|cc|cc|cc|cc|cc}
    \boldtoprule
                       &  \multicolumn{2}{c|}{(a) Base} & \multicolumn{2}{c|}{(b) Scaling} & \multicolumn{2}{c|}{(c) Posterior SM} & \multicolumn{2}{c|}{(d) Ours} & \multicolumn{2}{c}{(e) Oracle}\\ 
    \hline 
    $\vty$                       &  $\,\,\,\,\,\,\,\,\,$ $c_1$ &  $c_2$   & $\,\,\,\,\,\,\,\,\,$ $c_1$ & $c_2$   & $c_1$ & $c_2$   & $\,\,\,\,\,\,\,\,\,$ $c_1$ & $c_2$  & $\,\,\,$ $c_1$ & $c_2$ \\ \hline 
    $S_{P}(\vtx,\vty)$         &  \multicolumn{2}{c|}{$\nabla_{\vtx}\log p(\vty|\vtx;\theta) + \scoremodel(\vtx;\phi)$}  &  \multicolumn{2}{c|}{$\scalingfactor\nabla_{\vtx}\log p(\vty|\vtx;\theta) + \scoremodel(\vtx;\phi)$}  &  $\scoremodel(\vtx;\phi_1)$ & $\scoremodel(\vtx;\phi_2)$   &  \multicolumn{2}{c|}{$\nabla_{\vtx}\log p(\vty|\vtx;\theta) + \scoremodel(\vtx;\phi)$}  &  \multicolumn{2}{c}{$\nabla_\vtx\log \pjoint(\vtx|\vty)$} \\
    $S_{L}(\vtx,\vty)$         &  \multicolumn{2}{c|}{$\nabla_{\vtx}\log p(\vty|\vtx;\theta)$}  &  \multicolumn{2}{c|}{$\scalingfactor \nabla_{\vtx}\log p(\vty|\vtx;\theta)$}  &  \multicolumn{2}{c|}{-}   &  \multicolumn{2}{c|}{$\nabla_{\vtx} \log p(\vty|\vtx;\theta)$}  &  \multicolumn{2}{c}{$\nabla_\vtx\log \pjoint(\vty|\vtx)$} \\
    \hline 
    $\E_{U(\vtx)}[D_{P}(\vtx,\vty)]$   &  $\,\,\,\,\,\,\,\,\,$0.198 & 0.197   & $\,\,\,\,\,\,\,\,\,$2.734 & 2.627   & 0.063 & 0.060   & $\,\,\,\,\,\,\,\,\,$0.066 & 0.064  & $\,\,\,$0.000 & 0.000 \\
    $\E_{U(\vtx)}[D_{L}(\vtx,\vty)]$   &  $\,\,\,\,\,\,\,\,\,$0.190 & 0.181   & $\,\,\,\,\,\,\,\,\,$2.730 & 2.618   & -      & -        & $\,\,\,\,\,\,\,\,\,$0.052 & 0.052  & $\,\,\,$0.000 & 0.000 \\
    \hline
    Precision                             &  $\,\,\,\,\,\,\,\,\,$0.94 &  0.93 & $\,\,\,\,\,\,\,\,\,$0.95 & 0.94 & 0.97 & 0.97 & $\,\,\,\,\,\,\,\,\,$0.96 & 0.95 & $\,\,\,$1.00 &  1.00 \\
    Recall                          &  $\,\,\,\,\,\,\,\,\,$1.00 &  1.00 & $\,\,\,\,\,\,\,\,\,$0.97 & 0.97 & 1.00 & 1.00 & $\,\,\,\,\,\,\,\,\,$1.00 & 1.00 & $\,\,\,$1.00 &  1.00 \\
    \boldtoprule
\end{tabular}}
\label{tab:toy_experiment}
\end{table*}
In order to inspect the potential causes for the differences between the results produced by methods (a), (b), and (c), we incorporate metrics for evaluating the sampling quality and the errors between the scores in an quantitative manner. The sampling quality is evaluated using the precision and recall~\citep{kynkaanniemi2019improved} metrics. On the other hand, the estimation errors of the score functions are measured by the expected values of $D_{P}$ and $D_{L}$, which are formulated according to the Euclidean distances between the estimated scores and the oracle scores. The expressions of $D_{P}$ and $D_{L}$ are represented as the following:
\begin{equation} 
\label{eq:dp}
\begin{aligned}
    D_{P}(\vtx,\vty) &= \lVert S_{P}(\vtx,\vty) - \nabla_\vtx\log \pjoint(\vtx|\vty) \rVert, \\
    D_{L}(\vtx,\vty) &= \lVert S_{L}(\vtx,\vty) - \nabla_\vtx\log \pjoint(\vty|\vtx) \rVert,
\end{aligned}
\end{equation}
where the subscripts $P$ and $L$ denote `Posterior' and `Likelihood', while the terms $S_{P}(\vtx,\vty)$ and $S_{L}(\vtx,\vty)$ correspond to the estimated posterior score and the likelihood score for a certain pair $(\vtx,\vty)$, respectively. 

Table~\ref{tab:toy_experiment} presents $S_{P}(\vtx,\vty)$ and $S_{L}(\vtx,\vty)$, the expectations of $D_{P}$ and $D_{L}$, and the precision and recall for different methods. It can be seen that the numerical results in Table~\ref{tab:toy_experiment} are consistent with the observations revealed in Fig.~\ref{fig:toy_experiment}, since the expectations of $D_P$ are greater in methods (a) and (b) as compared to method (c), and the evaluated recall values in methods (a) and (b) are lower than those in method (c). Furthermore, the results in Table~\ref{tab:toy_experiment} suggest that the possible reasons for the disparity of the posterior scores between methods (a)$\sim$(c) are twofold. First, adding the scaling factor $\alpha$ to the likelihood scores increases the expected distance between the estimated score and oracle score, which may exacerbate the score mismatch issue. Second, since the main difference between methods (a)$\sim$(c) lies in their score-matching objectives, i.e., the parameters $\theta$ in methods (a) and (b) are optimized using $L_\mathrm{CE}$ only while the parameters $\phi_1$ and $\phi_2$ of the score models in method (c) are optimized through $L_\mathrm{DSM}$, the adoption of the score-matching objective may potentially be the key factor to the success of method (c). 

The above experimental clues therefore shed light on two essential issues to be further explored and dealt with. First, although employing a classifier trained with $L_\mathrm{CE}$ to assist estimating the oracle posterior score is theoretically feasible, this method may potentially lead to considerable discrepancies in practice. This implies that the score mismatch issue stated in Section~\ref{sec:conditional_sampling} may be the result of the inaccurate likelihood scores produced by a classifier. Second, the comparisons between methods (a), (b), and (c) suggest that score matching may potentially be the solution to the score mismatch issue. Based on these hypotheses, this paper explores an alternative approach to the previous works, called denoising likelihood score matching, which incorporates the score-matching technique when training the classifier to enhance its capability to capture the true likelihood scores. In the next section, we detail the theoretical derivation and the property of our method. 

\section{Denoising Likelihood Score Matching}
\label{sec:likelihood_denoising_score_matcing}
In this section, we introduce the proposed denoising likelihood score-matching (DLSM) loss, a new training objective that encourages the classifier to capture the true likelihood score. In Section~\ref{sec:likelihood_denoising_score_matcing:derivation}, we derive this loss function and discuss the intuitions behind it. In Section~\ref{sec:likelihood_denoising_score_matcing:training}, we elaborate on the training procedure of DLSM, and highlight the key property and implications of this training objective. 

\subsection{The Derivation of the Denoising Likelihood Score Matching Loss}
\label{sec:likelihood_denoising_score_matcing:derivation}
As discussed in Section~\ref{sec:analysis}, a score model trained with the score-matching objective can potentially be beneficial in producing a better posterior score estimation. In light of this, a classifier may be enhanced if the score-matching process is involved during its training procedure. An intuitive way to accomplish this aim is through minimizing the explicit likelihood score-matching loss $L_\mathrm{ELSM}$, which is defined as the following:
\begin{equation}
\begin{aligned}
    L_\mathrm{ELSM}(\theta) = \E_{\pjoint(\vtx,\vty)}\left[\frac{1}{2} \lVert\nabla_\vtx\log p(\vty|\vtx;\theta) - \nabla_\vtx\log\pjoint(\vty|\vtx) \rVert^2 \right].
\end{aligned}
\label{eq:L_ELSM}
\end{equation}
This loss term, however, involves the calculation of the true likelihood score $\nabla_\vtx\log \pjoint(\vty|\vtx)$, whose computational cost grows with respect to the dataset size.
In order to reduce the computational cost, we follow the derivation of DSM as well as Bayes' theorem, and formulate an alternative objective called DLSM loss ($L_\mathrm{DLSM}$):
\begin{equation}
\label{eq:dlsm}
\begin{aligned}
    L_\mathrm{DLSM}(\theta) = \E_{\pjoint(\vtx,\vx,\vty,\vy)}\left[ \frac{1}{2} \lVert\nabla_\vtx\log p(\vty|\vtx;\theta) + \nabla_\vtx\log\psigma(\vtx) - \nabla_\vtx\log \psigma(\vtx|\vx) \rVert^2 \right].
\end{aligned}
\end{equation}

\begin{theorem}
\label{theorem:DLSM} 
$L_\mathrm{DLSM}(\theta) = L_\mathrm{ELSM}(\theta) + C$, where $C$ is a constant with respect to $\theta$.
\end{theorem}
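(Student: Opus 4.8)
The plan is to follow the standard Vincent-style denoising-score-matching argument, but applied to the \emph{likelihood} score rather than the prior score. First I would expand the square in $L_\mathrm{DLSM}(\theta)$ and in $L_\mathrm{ELSM}(\theta)$ and identify the three types of terms: a term quadratic in the model output $\nabla_\vtx\log p(\vty|\vtx;\theta)$, a cross term linear in the model output, and a term independent of $\theta$. The quadratic term $\E[\tfrac12\lVert\nabla_\vtx\log p(\vty|\vtx;\theta)\rVert^2]$ is literally the same expression in both losses once one checks that the marginal of $\pjoint(\vtx,\vx,\vty,\vy)$ over $(\vtx,\vty)$ coincides with the distribution $\pjoint(\vtx,\vty)$ appearing in $L_\mathrm{ELSM}$ (this uses $\psigma(\vtx|\vx)$ and $\ptau(\vty|\vy)$ as the smoothing kernels and marginalizing out $\vx,\vy$). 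So the whole theorem reduces to showing the two cross terms agree.

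The heart of the argument is therefore the identity
\begin{equation}
\label{eq:plan_cross}
\begin{aligned}
\E_{\pjoint(\vtx,\vty)}\!\left[\langle \nabla_\vtx\log p(\vty|\vtx;\theta),\, \nabla_\vtx\log\pjoint(\vty|\vtx)\rangle\right]
= \E_{\pjoint(\vtx,\vx,\vty,\vy)}\!\left[\langle \nabla_\vtx\log p(\vty|\vtx;\theta),\, \nabla_\vtx\log\psigma(\vtx|\vx) - \nabla_\vtx\log\psigma(\vtx)\rangle\right].
\end{aligned}
\end{equation}
To prove this I would use Bayes' theorem in the form of Eq.~(\ref{eq:bayesian}), which gives $\nabla_\vtx\log\pjoint(\vty|\vtx) = \nabla_\vtx\log\pjoint(\vtx|\vty) - \nabla_\vtx\log\psigma(\vtx)$ (the $\nabla_\vtx\log\ptau(\vty)$ term vanishes). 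The $-\nabla_\vtx\log\psigma(\vtx)$ piece matches directly on both sides, so it remains to show $\E_{\pjoint(\vtx,\vty)}[\langle \nabla_\vtx\log p(\vty|\vtx;\theta), \nabla_\vtx\log\pjoint(\vtx|\vty)\rangle] = \E_{\pjoint(\vtx,\vx,\vty,\vy)}[\langle \nabla_\vtx\log p(\vty|\vtx;\theta), \nabla_\vtx\log\psigma(\vtx|\vx)\rangle]$. This is exactly Vincent's lemma but with an extra label variable carried along: writing the expectation as an integral against $\pjoint(\vtx|\vty)$, using $\pjoint(\vtx|\vty) = \int \psigma(\vtx|\vx)\,\pjoint(\vx|\vty)\,d\vx$ (for fixed label, the clean-to-noisy relationship only touches $\vtx$), and pushing the gradient $\nabla_\vtx$ onto $\psigma(\vtx|\vx)$ via $\nabla_\vtx\psigma(\vtx|\vx) = \psigma(\vtx|\vx)\nabla_\vtx\log\psigma(\vtx|\vx)$, one recovers the joint expectation on the right-hand side. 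I would present this as a short chain of equalities, being careful about which variables the conditional densities depend on and that swapping $\nabla_\vtx$ with the integral over $\vx$ is justified by the smoothness of the Gaussian kernel.

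Having matched the quadratic terms and the cross terms, the difference $L_\mathrm{DLSM}(\theta) - L_\mathrm{ELSM}(\theta)$ is the difference of the two $\theta$-independent remainders, namely $\E[\tfrac12\lVert\nabla_\vtx\log\psigma(\vtx) - \nabla_\vtx\log\psigma(\vtx|\vx)\rVert^2] - \E[\tfrac12\lVert\nabla_\vtx\log\pjoint(\vty|\vtx)\rVert^2]$, which is manifestly a constant $C$ with respect to $\theta$; this completes the proof. I expect the main obstacle to be purely bookkeeping rather than conceptual: namely being scrupulous about the joint distribution $\pjoint(\vtx,\vx,\vty,\vy)$ and its marginals/conditionals (which variables are noised, which kernel acts on which variable, and why the label kernel $\ptau$ plays no role because the gradient is only in $\vtx$), and making sure the regularity conditions for differentiating under the integral sign and for the score functions being well-defined (everywhere non-zero, differentiable densities, vanishing boundary terms) are the same standing assumptions already invoked for DSM in Section~\ref{sec:score_matching}. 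I would state these assumptions explicitly at the start of the proof and then carry out the two matching computations.
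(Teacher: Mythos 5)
Your proposal is correct and follows essentially the same route as the paper's own proof in Appendix~\ref{apx:proof:dlsm}: expand both squares, match the quadratic terms via marginalization of $\pjoint(\vtx,\vx,\vty,\vy)$, split the $L_\mathrm{ELSM}$ cross term using Bayes' theorem into a $\nabla_\vtx\log\psigma(\vtx)$ piece and a $\nabla_\vtx\log\pjoint(\vtx|\vty)$ piece, and reduce the latter to $\E_{\pjoint(\vtx,\vx,\vty,\vy)}[\langle\nabla_\vtx\log p(\vty|\vtx;\theta),\nabla_\vtx\log\psigma(\vtx|\vx)\rangle]$ by Vincent's integral-swap argument conditioned on the label (the paper's $G(\theta)$ computation). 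The remaining $\theta$-independent terms give the constant $C$, exactly as you describe.
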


The proof of this theorem is provided in Appendix~\ref{apx:proof:dlsm}. Theorem~\ref{theorem:DLSM} suggests that optimizing $\theta$ with $L_\mathrm{ELSM}(\theta)$ is equivalent to optimizing $\theta$ with $L_\mathrm{DLSM}(\theta)$. In contrast to $L_\mathrm{ELSM}$, $L_\mathrm{DLSM}$ can be approximated in a computationally feasible fashion. This is because $\nabla_\vtx\log\psigma(\vtx)$ can be estimated using a score model $\scoremodel(\vtx;\phi)$ trained with $L_\mathrm{DSM}$, and $\nabla_\vtx\log \psigma(\vtx|\vx) = \frac{1}{\sigma^2}(\vx-\vtx)$ as described in Section~\ref{sec:score_matching}. As $\nabla_\vtx\log\psigma(\vtx)$ and $\nabla_\vtx\log \psigma(\vtx|\vx)$ can be computed in a tractable manner, the classifier can be updated by minimizing the approximated variant of $L_\mathrm{DLSM}$, defined as:
\begin{equation}
\label{eq:dlsmp}
\begin{aligned}
    L_\mathrm{DLSM'}(\theta) = \E_{\pjoint(\vtx,\vx,\vty,\vy)}\left[ \frac{1}{2} \lVert\nabla_\vtx\log p(\vty|\vtx;\theta) + \scoremodel(\vtx;\phi) - \nabla_\vtx\log \psigma(\vtx|\vx) \rVert^2 \right].
\end{aligned}
\end{equation}
The underlying intuition of $L_\mathrm{DLSM}$ and $L_\mathrm{DLSM'}$ is to match the likelihood score via matching the posterior score. More specifically, the sum of the first two terms $\nabla_\vtx\log p(\vty|\vtx; \theta)$ and $\nabla_\vtx\log \psigma(\vtx)$ in $L_\mathrm{DLSM}$ should ideally construct the posterior score (i.e., Eq.~(\ref{eq:bayesian})). By following the posterior score, the perturbed data sample $\vtx$ should move towards the clean sample $\vx$, since $\nabla_\vtx\log \psigma(\vtx|\vx)$ is equal to the weighted difference $\frac{1}{\sigma^2}(\vx-\vtx)$ (i.e., Eq.~(\ref{eq:dsm})). In the next section, we discuss the training procedure with $L_\mathrm{DLSM'}$ as well as the property of it.

\begin{figure}[t]
    \centering
    \includegraphics[width=\linewidth]{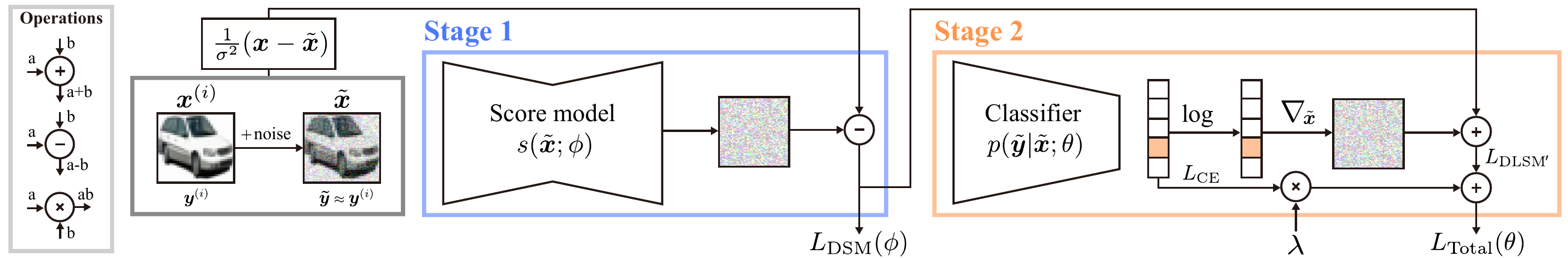}
    \caption{The training procedure of the proposed methodology.}
    \label{fig:training}
\end{figure}
\subsection{The Approximated DLSM Objective in the Classifier Training Process}
\label{sec:likelihood_denoising_score_matcing:training}
Following the theoretical derivation in Section~\ref{sec:likelihood_denoising_score_matcing:derivation}, we next discuss the practical aspects during training, and propose to train the classifier by jointly minimizing the approximated denoising likelihood score-matching loss $L_\mathrm{DLSM'}$ and the cross-entropy loss $L_\mathrm{CE}$. In practice, the total training objective of the classifier can be written as follows:
\begin{equation} 
\label{eq:total_loss}
\begin{aligned}
L_\mathrm{Total}(\theta) = L_\mathrm{DLSM'}(\theta) + \lambda L_\mathrm{CE}(\theta),
\end{aligned}
\end{equation}
where $\lambda> 0$ is a balancing coefficient. According to Theorem~\ref{theorem:DLSM}, employing $L_\mathrm{DLSM'}$ allows the gradients of the classifier to estimate the likelihood scores. However, since the computation of the term $L_\mathrm{DLSM'}$ requires the approximation of $\nabla_\vtx\log\psigma(\vtx)$ from a score model $\scoremodel(\vtx;\phi)$, the training process that utilizes $L_\mathrm{DLSM'}$ alone is undesirable due to its instability on real-world datasets. To reinforce the stability of it, an additional cross-entropy loss $L_\mathrm{CE}$ is adopted during the training process of the classifier. The advantage of $L_\mathrm{CE}$ is that it leverages the ground truth labels to assist the classifier to learn to match the true likelihood density $\pjoint(\vty|\vtx)$, which in turn helps the score estimation to be improved. 
To validate the above advantage, an ablation analysis are offered in Section~\ref{sec:experiments:ablation} to demonstrate the differences between the classifiers trained with $L_\mathrm{Total}$, $L_\mathrm{CE}$, and $L_\mathrm{DLSM'}$. The analysis reveals that $L_\mathrm{Total}$ does provide the best score-matching results while maintaining the stability.

Fig.~\ref{fig:training} depicts a two-stage training procedure adopted in this work. In stage 1, a score model $\scoremodel(\vtx;\phi)$ is updated using $L_\mathrm{DSM}$ to match $\nabla_\vtx\log\psigma(\vtx)$ (i.e., Eq.~(\ref{eq:dsm})). In stage 2, the weights of the trained score model are fixed, and a classifier $p(\vty|\vtx;\theta)$ is updated using $L_\mathrm{Total}$. After these two training stages, $\scoremodel(\vtx;\phi)$ and $\nabla_\vtx\log p(\vty|\vtx;\theta)$ can then be added together based on Eq.~(\ref{eq:bayesian2}) to perform conditional sampling.
\section{Experiments}
\label{sec:experiments}

In this section, we present experiments to validate the effectiveness of the proposed method. First, we provide the experimental configurations and describe the baseline methods. Next, we evaluate the proposed method quantitatively against the baselines, and demonstrate its improved performance on the Cifar-10 and Cifar-100 datasets. Lastly, we offer an ablation analysis to inspect the impacts of $L_\mathrm{CE}$ and $L_\mathrm{DLSM'}$ in $L_\mathrm{Total}$.

\subsection{Experimental Setups}
\label{sec:experiments:setup}
In this work, we quantitatively compare the base method, scaling method, and our method (i.e., (a), (b), and (d) in Section~\ref{sec:analysis}) on the Cifar-10 and Cifar-100 datasets, which contain real-world RGB images with 10 and 100 categories of objects, respectively. For both of the datasets, the image size is 32$\times$32$\times$3, and the pixel values in an image are first rescaled to $[-1,1]$ before training. For the sampling algorithm, we adopt the predictor-corrector (PC) sampler described in~\citep{song2021scorebased} with the sampling steps set to $T$=1,000. The score model architecture is exactly the same as the one used in~\citep{song2021scorebased}, while the architecture of the classifier is based on ResNet~\citep{he2016deep} with a conditional branch for encoding the information of the standard deviation $\sigma$~\citep{song2021scorebased}. When training the classifier, $\tau$ is assumed to be sufficiently small, thus the unperturbed labels $\vy$ are adopted. Furthermore, the balancing coefficient $\lambda$ is set to 1 for the Cifar-10 and Cifar-100 datasets, and 0.125 for the motivational example. For more details about our implementation and additional experimental results, please refer to Appendix~\ref{apx:config} and Appendix~\ref{apx:additional_exp}.

\begin{table}
	\begin{minipage}{0.6\linewidth}
		\renewcommand{\arraystretch}{1.1}
        \newcommand{\boldtoprule}{\toprule[1.2pt]}
        \centering
        \footnotesize
        \caption{The evaluation results on the Cifar-10 and Cifar-100 datasets. The P / R / D / C metrics with `(CW)' in the last four rows represents the average class-wise metrics described in Section~\ref{sec:experiments:benchmarks}. The arrow symbols $\uparrow / \downarrow$ represent that a higher / lower evaluation result correspond to a better performance.}
        \resizebox{\linewidth}{!}{%
        \begin{tabular}{l|c|ccc||ccc}
            \boldtoprule
                              && \multicolumn{3}{c||}{Cifar-10}  &  \multicolumn{3}{c}{Cifar-100} \\ 
                              &&  Base &  Scaling  &  Ours  &  Base &  Scaling  &  Ours \\ 
            \hline
            FID        & $\downarrow$ &  4.10     &  12.48   &  \textbf{2.25}  &  4.52  & 12.58  &  \textbf{3.86}  \\
            IS         & $\uparrow$   &  9.08     &  9.37    &  \textbf{9.90}  &  11.53 & 11.59 &  \textbf{11.62} \\
            \hline
            Precision   & $\uparrow$ &  0.67     &  \textbf{0.75}    &  0.65  &  0.62     &  \textbf{0.65}    &  0.61  \\
            Recall      & $\uparrow$ &  0.61     &  0.49    &  \textbf{0.62}  &  0.62     &   0.52  &  \textbf{0.63}  \\
            Density     & $\uparrow$ &  1.05     &  \textbf{1.36}    &  0.96  &  0.84     &  \textbf{0.93}    &  0.82  \\
            Coverage    & $\uparrow$ &  0.80     &  0.75    &  \textbf{0.81}  &  \textbf{0.71}     &  0.61  &  \textbf{0.71}  \\
            \hline \hline
            CAS         & $\uparrow$ &  0.38    &   0.46    &  \textbf{0.58}  & 0.15  &  0.24 &  \textbf{0.28}   \\
            \hline 
            (CW) Precision  & $\uparrow$ &  0.51     &  \textbf{0.70}    &  0.56  &  0.33  &  \textbf{0.59} &  0.42 \\
            (CW) Recall     & $\uparrow$ &  0.59     &  0.42    &  \textbf{0.61}  &  \textbf{0.68}  &  0.41 &  \textbf{0.68} \\
            (CW) Density    & $\uparrow$ &  0.63     &  \textbf{1.23}    &  0.76  &  0.30  &  \textbf{0.78} &  0.43 \\
            (CW) Coverage   & $\uparrow$ &  0.60     &  0.66    &  \textbf{0.71}  &  0.38  &  0.51 &  \textbf{0.52} \\
            \boldtoprule
        \end{tabular}}
        \label{tab:benchmark}
	\end{minipage}\hfill
	\begin{minipage}{0.35\linewidth}
		\centering
		\vspace{-1em}
		\includegraphics[width=0.8\linewidth]{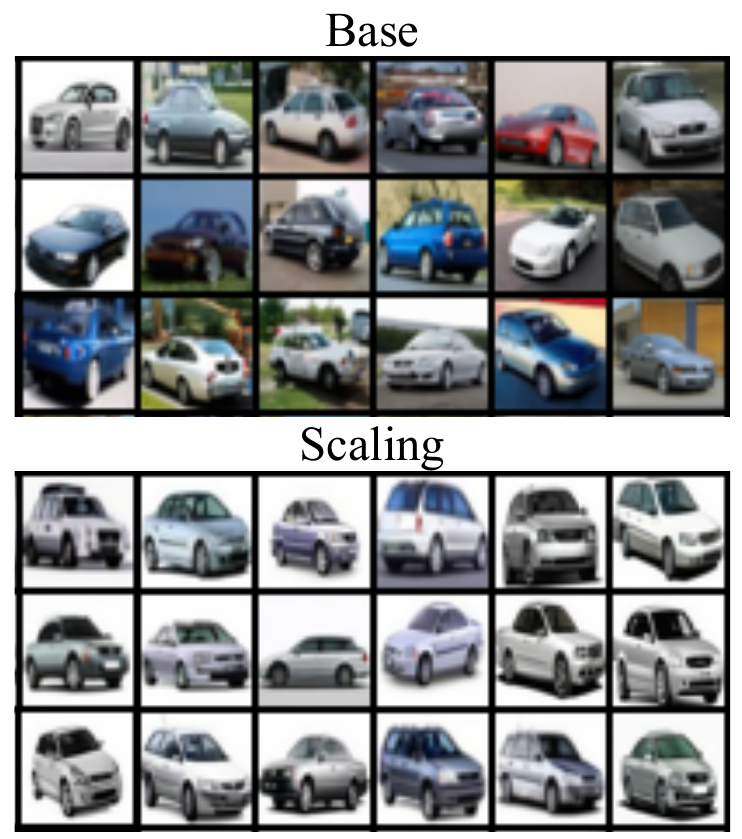}
		\captionof{figure}{A comparison of the curated samples generated via the base method and the scaling method. As for the uncurated visualized samples, please refer to Appendix~\ref{apx:additional_exp:uncurrated}.}
		\label{fig:mode_collapse}
	\end{minipage}
\end{table}

\subsection{Results on Cifar-10 and Cifar-100}
\label{sec:experiments:benchmarks}

In this section, we examine the effectiveness of the base method, the scaling method, and our proposed method on the Cifar-10 and Cifar-100 benchmarks with several key evaluation metrics. We adopt the Inception Score (IS)~\citep{barratt2018note} and the Fr\'echet Inception Distance (FID)~\citep{heusel2017gans} as the metrics for evaluating the overall sampling quality by comparing the similarity between the distributions of the generated images and the real images. We also evaluate the methods using the Precision (P), Recall (R)~\citep{kynkaanniemi2019improved}, Density (D), and Coverage (C)~\citep{ferjad2020icml} metrics to further examine the fidelity and diversity of the generated images. In addition, we report the Classification Accuracy Score (CAS)~\citep{ravuri2019classification} to measure if the generated samples bear representative class information. Given a dataset containing $m_i$ images for each class $i$, we first conditionally generate the same number of images (i.e., $m_i$) for each class. The FID, IS, P / R / D / C, and CAS metrics are then evaluated based on all the generated $m=\sum_{i=1}^c m_i$ images, where $c$ is the total number of classes. In other words, the above metrics are evaluated in an unconditional manner.

Table~\ref{tab:benchmark} reports the quantitative results of the above methods. It is observed that the proposed method outperforms the other two methods with substantial margins in terms of FID and IS, indicating that the generated samples bear closer resemblance to the real data. Meanwhile, for the P / R / D / C metrics, the scaling method is superior to the other two methods in terms of the fidelity metrics (i.e., precision and density). However, this method may cause the diversity of the generated images to degrade, as depicted in Fig.~\ref{fig:mode_collapse}, resulting in significant performance drops for the diversity metrics (i.e., the recall and the coverage metrics). 

Another insight is that the base method achieves relatively better performance on the precision and density metrics in comparison to our method. However, it fails to deliver analogous tendency on the CAS metric. This behavior indicates that the base method may be susceptible to generating false positive samples, since the evaluation of the P / R / D / C metrics does not involve the class information, and thus may fail to consider samples with wrong classes. Such a phenomenon motivates us to further introduce a set of class-wise (CW) metrics, which takes the class information into account by evaluating the P / R / D / C metrics on a per-class basis. Specifically, the class-wise metrics are evaluated separately for each class $i$ with $m_i$ images. The evaluation results of these metrics shown in Fig.~\ref{fig:prdc} reveal that our method outperforms the base method for a majority of classes. Moreover, the results of the average class-wise metrics presented in the last four rows of Table~\ref{tab:benchmark} also show that our method yields a better performance as compared to the base method.

Base on these evidences, it can be concluded that the proposed method outperforms both baseline methods in terms of FID and IS, implying that our method does possess a better ability to capture the true data distribution. Additionally, the evaluation results on the CAS and the class-wise metrics suggest that our method does offer a superior ability for a classifier to learn accurate class information as compared to the base method.

\begin{figure}[t]
    \centering
    \vspace{-0.5em}
    \includegraphics[width=\linewidth]{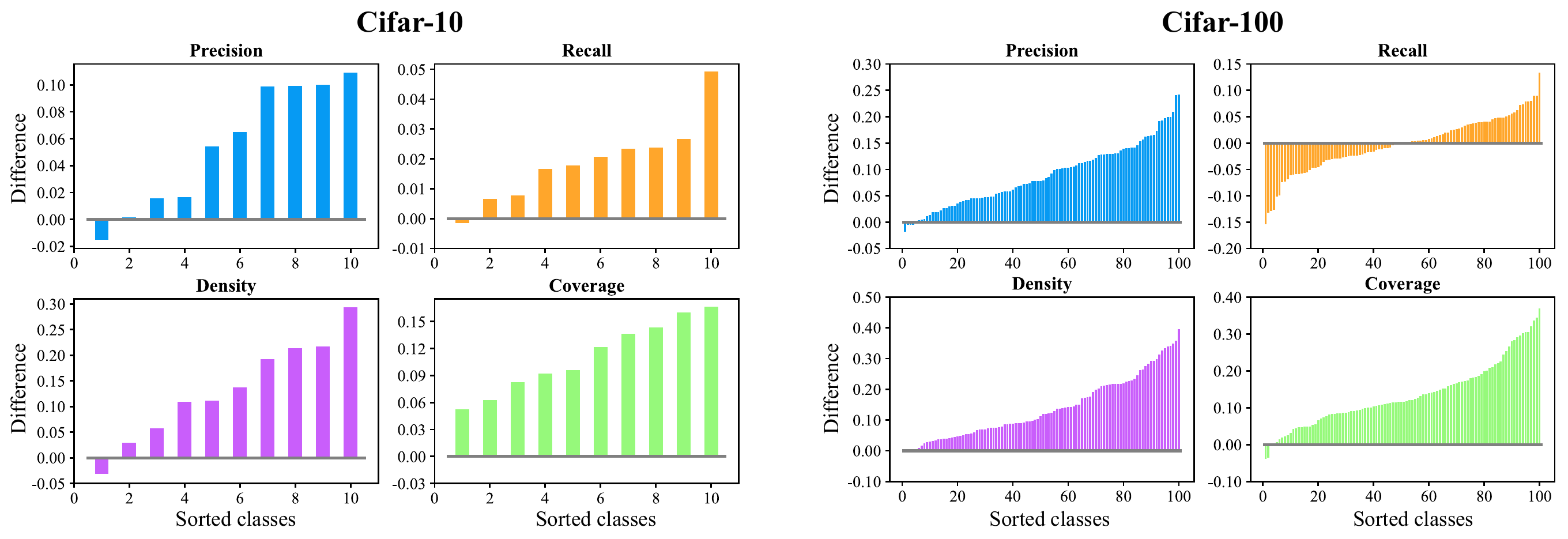}
    \vspace{-1.5em}
    \caption{The sorted differences between the proposed method and the base method evaluated on the Cifar-10 and Cifar-100 datasets for the class-wise P / R / D / C metrics. Each colored bar in the plots represents the differences between our method and the base method evaluated using one of the P / R / D / C metrics for a certain class. A positive difference represents that our method outperforms the base method for that class. }
    \label{fig:prdc}
\end{figure}

\subsection{Ablation Analysis}
\label{sec:experiments:ablation} 
\begin{wrapfigure}{r}{0.44\linewidth}
    \vspace{-1.3em}
    \centering
    \includegraphics[width=\linewidth]{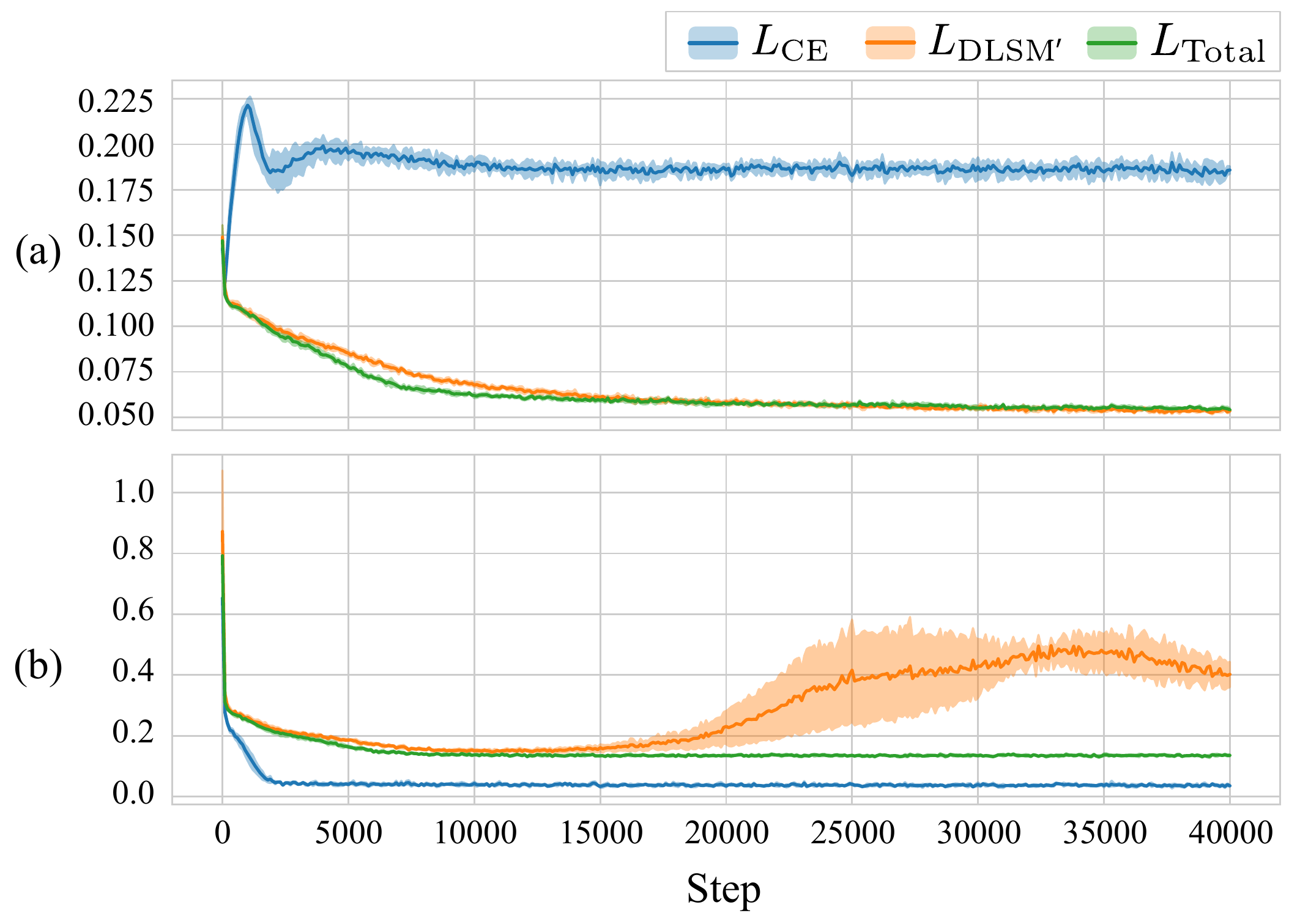}
    \vspace{-1.3em}
    \caption{The evaluation curves of (a) the score errors and (b) the cross entropy during the training iterations for $L_\mathrm{CE}$, $L_\mathrm{DLSM'}$, and $L_\mathrm{Total}$. The curves depict the mean and 95\% confidence interval of five times of training.}
    \label{fig:curve}
\end{wrapfigure}

To further investigate the characteristic of $L_\mathrm{Total}$, we perform an ablation analysis on the two components $L_\mathrm{CE}$ and $L_\mathrm{DLSM'}$ in $L_\mathrm{Total}$ using the inter-twinning moon dataset, and observe the trends of (a) the score errors $\frac{1}{2}\E_{U(\vtx)}[D_{L}(\vtx, c_1)] + \frac{1}{2}\E_{U(\vtx)}[D_{L}(\vtx, c_2)]$, and (b) the cross entropy $\E_{\pjoint(\vtx,\vty)}[-\log p(\vty|\vtx;\theta)]$ during the training iterations. Metrics (a) and (b) measure the accuracies of 
the estimated likelihood scores $\nabla_\vtx\log p(\vty|\vtx;\theta)$ and the estimated likelihood density $p(\vty|\vtx;\theta)$, respectively. As depicted in Fig~\ref{fig:curve}, an increasing trend is observed for metric (a) when the classifier is trained with $L_\mathrm{CE}$ alone. On the contrary, the opposite tendency is observed for those trained with $L_\mathrm{Total}$ and $L_\mathrm{DLSM'}$. The results thus suggest that matching the likelihood scores implicitly through training a classifier with $L_\mathrm{CE}$ alone leads to larger approximation errors. In contrast, $L_\mathrm{DLSM'}$ and $L_\mathrm{Total}$ explicitly encourage the classifier to capture accurate likelihood scores, since they involve the score-matching objective. On the other hand, for metric (b), the classifiers trained with $L_\mathrm{Total}$ and $L_\mathrm{CE}$ yield stable decreasing trend in comparison to that trained with $L_\mathrm{DLSM'}$ alone. These results suggest that in addition to minimizing $L_\mathrm{DLSM'}$ between $\nabla_\vtx\log \pjoint(\vty|\vtx)$ and $\nabla_\vtx\log p(\vty|\vtx;\theta)$, the utilization of $L_\mathrm{CE}$ as an auxiliary objective enhances the stability during training. Based on the above observations, the interplay of $L_\mathrm{CE}$ and $L_\mathrm{DLSM'}$ synergistically achieves the best results in terms of metric (a) while ensuring the stability of the training process. These clues thus validate the adoption of $L_\mathrm{Total}$ during the training of the classifier.
\section{Conclusion}
\label{sec:conclusion}
In this paper, we highlighted the score mismatch issue in the existing conditional score-based data generation methods, and theoretically derived a new denoising likelihood score-matching (DLSM) loss, which is a training objective for the classifier to match the true likelihood score. We have demonstrated that, by adopting the proposed DLSM loss, the likelihood scores can be better estimated, and the negative impact of the score mismatch issue can be alleviated. Our experimental results validated that the proposed method does offer benefits in producing higher-quality conditional sampling results on both Cifar-10 and Cifar-100 datasets.

\subsubsection*{Acknowledgments}
This work was supported by the Ministry of Science and Technology (MOST) in Taiwan under grant number MOST 111-2628-E-007-010. The authors acknowledge the financial support from MediaTek Inc., Taiwan. The authors would also like to acknowledge the donation of the GPUs from NVIDIA Corporation and NVIDIA AI Technology Center (NVAITC) used in this research work. The authors thank National Center for High-Performance Computing (NCHC) for providing computational and storage resources. Finally, the authors would also like to thank the time and effort of the anonymous reviewers for reviewing this paper.
\bibliography{iclr2022_conference}
\bibliographystyle{iclr2022_conference}

\newpage

\appendix
\section{Appendix}
\setcounter{theorem}{0}
\setcounter{equation}{0}
\setcounter{figure}{0}
\setcounter{table}{0}

\renewcommand{\thetable}{A\arabic{table}}
\renewcommand{\theequation}{A\arabic{equation}}
\renewcommand{\thefigure}{A\arabic{figure}}

In this Appendix, we first provide the definitions for the symbols used in the main manuscript and the Appendix in Section~\ref{apx:symbols}. Next, we derive the theoretical proof for the closed form of $\nabla_\vtx\log\psigma(\vtx)$ in Section~\ref{apx:proof:parzen}. Then, we offer a number of discussions on the scaling technique in Section~\ref{apx:scaling_trick}, and different conditional score-based data generation methods in Section~\ref{apx:conditional_frameworks}. Subsequently, we offer the theoretical proof for Theorem~\ref{theorem:DLSM} in Section~\ref{apx:proof:dlsm}. Finally, in Sections~\ref{apx:config} and~\ref{apx:additional_exp}, we provide the detailed experimental setups and additional experimental results.

\subsection{The List of Symbols Used in the Paper}
\label{apx:symbols}
In this section, we provide the list of symbols used throughout the main manuscript and the Appendix.  These symbols are summarized in Tables~\ref{tab:notation} and \ref{tab:notation_2}, containing their notations as well as their detailed descriptions.
\begin{table*}[h]
\renewcommand{\arraystretch}{1.2}
\newcommand{\boldtoprule}{\toprule[1.2pt]}
\centering
\footnotesize
\begin{tabularx}{\textwidth}{l|X}
    \boldtoprule
    Symbol & Description  \\
    \boldtoprule
    $\langle\vu,\vv\rangle=\vu^T\vv=\sum_i\vu_i \vv_i $ & dot (inner) product between two vectors. \\
    $\lVert\vu\rVert=\sqrt{\langle\vu,\vu\rangle}$ & Euclidean norm of vector. \\
    $\mI_{\xdim\times\xdim}$                 & identity matrix with dimension $\xdim\times\xdim$. \\
    \hline
    $\epsilon$            & the step size used in Langevin Diffusion. \\
    $t\in\{0,...,T\}$     & the discrete timestep used in Langevin Diffusion. \\
    $\vx,\vx_t\in\sR^\xdim$ & data sample with dimension $\xdim$, where $\vx_0$ is sampled from $\mathcal{N}(\vzero,\mI_{\xdim\times\xdim})$. \\
    $\vtx\in\sR^\xdim$    & perturbed data sample used in Denoising Score Matching. \\
    $\vy\in\sR^\ydim$ & conditional class label with dimension $\ydim$. \\
    $\vty\in\sR^\ydim$    & perturbed class label used in Denoising Score Matching. \\
    $\vz:\sR^\xdim$ & the noise vector used in Langevin Diffusion. (sampled from $\mathcal{N}(\vzero,\mI_{\xdim\times\xdim})$) \\
    \hline
    $\vx^{(i)}\in\sR^\xdim$           & the $i$-th example (input) from a dataset $\sD$. \\
    $\vy^{(i)}\in\{0,1\}^\ydim$           & the class label associated with $\vx^{(i)}$ for supervised learning (one-hot encoded). \\
    $\sD=\{(\vx^{(1)},\vy^{(1)}),...,(\vx^{(m)},\vy^{(m)})\}$ & Training set: $m$ data-label pairs from $\pd$. (i.i.d. samples from $\pd$) \\
    \hline
    $\pd(\vx)$            & unknown true probability density function (pdf) of data samples. \\
    $\pzero(\vx)=\frac{1}{m}\sum_{i=1}^m\delta(\lVert\vx-\vx^{(i)}\rVert)$ & empirical pdf of data samples associated with the training set $\sD$. \\
    $\psigma(\vtx|\vx)=\frac{1}{(2\pi)^{\xdim/2}\sigma^\xdim}e^{\frac{-1}{2\sigma^2}\lVert\vtx-\vx\rVert^2}$ & smoothing kernel or noise model: sampled from isotropic Gaussian $\mathcal{N}(\vx, \sigma^2\mI_{\xdim\times\xdim})$ with mean $\vx$ and variance $\sigma^2$.\\
    $\psigma(\vtx,\vx)=\psigma(\vtx|\vx)\pzero(\vx)$ & joint pdf of perturbed data and clean data samples. \\
    $\psigma(\vtx)=\frac{1}{m}\sum_{i=1}^m\psigma(\vtx|\vx^{(i)})$ & Parzen density estimate based on $\sD$, obtainable by marginalizing $\psigma(\vtx,\vx)$. \\
    \hline
    $\pd(\vy)$            & unknown true pdf of data labels. \\
    $\pzero(\vy)=\frac{1}{m}\sum_{i=1}^m\delta(\lVert\vy-\vy^{(i)}\rVert)$ & empirical pdf of data labels associated with the training set $\sD$. \\
    $\ptau(\vty|\vy)=\frac{1}{(2\pi)^{\ydim/2}\tau^\ydim}e^{\frac{-1}{2\tau^2}\lVert\vty-\vy\rVert^2}$ & smoothing kernel or noise model: sampled from isotropic Gaussian $\mathcal{N}(\vy, \tau^2\mI_{\ydim\times\ydim})$ with mean $\vy$ and variance $\tau^2$.\\
    $\ptau(\vty,\vy)=\ptau(\vty|\vy)\pzero(\vy)$ & joint pdf of perturbed labels and clean labels. \\
    $\ptau(\vty)=\frac{1}{m}\sum_{i=1}^m\ptau(\vty|\vy^{(i)})$ & Parzen density estimate based on $\sD$, obtainable by marginalizing $\ptau(\vty,\vy)$. \\
    \hline
    $p_{0,0}(\vx,\vy)$ & defined as $\frac{1}{m}\sum_{i=1}^m\delta(\lVert\vx-\vx^{(i)}\rVert)\delta(\lVert\vy-\vy^{(i)}\rVert)$. \\
    $p_{\sigma,0}(\vtx,\vx,\vy)$ & equals to $\psigma(\vtx|\vx)p_{0,0}(\vx,\vy)$. \\ 
    $p_{0,\tau}(\vx,\vty,\vy)$ & equals to $\ptau(\vty|\vy)p_{0,0}(\vx,\vy)$. \\ 
    $\pjoint(\vtx,\vx,\vty,\vy)$ & defined as $\psigma(\vtx|\vx)\ptau(\vty|\vy)p_{0,0}(\vx,\vy)$. \\
    $\scoremodel(\vtx;\phi)$ & Score Model: approximated score function with parameters $\phi$. \\
    $p(\vty|\vtx;\theta)$ & Classifier: approximated probability function parameterized by $\theta$. \\
    \boldtoprule
\end{tabularx}
\caption{The list of symbols used in this paper.}
\label{tab:notation}
\end{table*}

\begin{table*}[h]
\renewcommand{\arraystretch}{1.2}
\newcommand{\boldtoprule}{\toprule[1.2pt]}
\centering
\footnotesize
\begin{tabularx}{\textwidth}{l|X}
    \boldtoprule
    $\scalingfactor$ & the scaling factor. \\
    $\nabla_\vtx\log p(\vtx|\vy)$ & posterior score.\\
    $\nabla_\vtx\log p(\vy|\vtx)$ & likelihood score.\\
    $\nabla_\vtx\log p(\vtx)$ & prior score.\\
    \hline
    $L_{\mathrm{ESM}}(\phi)$ & Explicit Score Matching (ESM) loss for optimizing $\scoremodel(\vtx;\phi)$.\\
    $L_{\mathrm{DSM}}(\phi)$ & Denoising Score Matching loss (DSM) for optimizing $\scoremodel(\vtx;\phi)$.\\
    $L_{\mathrm{ELSM}}(\theta)$ & Explicit Likelihood Score Matching (ELSM) loss for optimizing $p(\vy|\vtx;\theta)$.\\
    $L_{\mathrm{DLSM}}(\theta)$ & Denoising Likelihood Score Matching (DLSM) loss for optimizing $p(\vy|\vtx;\theta)$.\\
    $L_{\mathrm{DLSM'}}(\theta)$ & approximated Denoising Likelihood Score Matching loss for optimizing $p(\vy|\vtx;\theta)$.\\
    $L_{\mathrm{CE}}(\theta)$ & Cross Entropy (CE) loss for optimizing $p(\vy|\vtx;\theta)$.\\
    $L_{\mathrm{Total}}(\theta)$ & full loss for optimizing $p(\vy|\vtx;\theta)$.\\
    \boldtoprule
\end{tabularx}
\caption{The list of symbols used in this paper.}
\label{tab:notation_2}
\end{table*}

\subsection{A Proof of the Closed Form for $\nabla_\vtx\log\psigma(\vtx)$}
\label{apx:proof:parzen}
In Sections~\ref{sec:parzen_density_estimation} and \ref{sec:analysis}, we utilized the closed form formula to calculate the true scores $\nabla_\vtx\log\psigma(\vtx)$. To show the equivalence holds for Eq.~(\ref{eq:oracle}), we provide a formal derivation as shown in the following proposition.
\begin{prop}
\label{sup:thm:oracle}

Given $\psigma(\vtx)=\frac{1}{m}\sum_{i=1}^m\psigma(\vtx|\vx^{(i)})$, $\psigma(\vtx|\vx)=\frac{1}{(2\pi)^{\xdim/2}\sigma^\xdim}e^{\frac{-1}{2\sigma^2}\lVert \vtx-\vx\rVert ^2}$, and $\sigma>0$, the closed form of $\nabla_\vtx\log\psigma(\vtx)$ can be represented as follows:

\begin{equation*}
\begin{aligned}
\nabla_\vtx\log\psigma(\vtx) = \frac{\sum_{i=1}^{m}\frac{1}{\sigma^2}(\vx^{(i)}-\vtx)\psigma(\vtx|\vx^{(i)})}{\sum_{i=1}^{m}\psigma(\vtx|\vx^{(i)})}.
\end{aligned}
\end{equation*}

\begin{proof}

\begin{equation*}
\begin{aligned}
\nabla_\vtx\log\psigma(\vtx)
&=\nabla_\vtx\log\frac{1}{m}\sum_{i=1}^m\psigma(\vtx|\vx^{(i)}) \\
&=\nabla_\vtx\log\sum_{i=1}^m\psigma(\vtx|\vx^{(i)})+\underbrace{\nabla_\vtx\log\frac{1}{m}}_{=0} \\
&=\frac{\nabla_\vtx\sum_{i=1}^m\psigma(\vtx|\vx^{(i)})}{\sum_{i=1}^m\psigma(\vtx|\vx^{(i)})} \\
&=\frac{\sum_{i=1}^m\nabla_\vtx\psigma(\vtx|\vx^{(i)})}{\sum_{i=1}^m\psigma(\vtx|\vx^{(i)})} \\
&=\frac{\sum_{i=1}^m\nabla_\vtx\left[ \frac{1}{(2\pi)^{\xdim/2}\sigma^\xdim}e^{\frac{-1}{2\sigma^2}\lVert\vtx-\vx^{(i)}\rVert ^2}\right]}{\sum_{i=1}^m\psigma(\vtx|\vx^{(i)})} \\
&=\frac{\sum_{i=1}^m\left[ \frac{(\vx^{(i)}-\vtx)/\sigma^2}{(2\pi)^{\xdim/2}\sigma^\xdim}e^{\frac{-1}{2\sigma^2}\lVert \vtx-\vx^{(i)}\rVert ^2}\right]}{\sum_{i=1}^m\psigma(\vtx|\vx^{(i)})} \\
&= \frac{\sum_{i=1}^{m}\frac{1}{\sigma^2}(\vx^{(i)}-\vtx)\psigma(\vtx|\vx^{(i)})}{\sum_{i=1}^{m}\psigma(\vtx|\vx^{(i)})} \\
\end{aligned}
\end{equation*}
\end{proof}

\end{prop}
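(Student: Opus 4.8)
The plan is to differentiate the logarithm of the Parzen mixture directly and simplify. First I would invoke the elementary identity $\nabla_\vtx \log f(\vtx) = \nabla_\vtx f(\vtx) / f(\vtx)$, valid wherever $f$ is positive and differentiable; here this is guaranteed because each Gaussian kernel $\psigma(\vtx|\vx^{(i)})$ is everywhere strictly positive and smooth for $\sigma>0$, so their average $\psigma(\vtx)$ is as well. Applying this to $\psigma(\vtx)=\frac{1}{m}\sum_{i=1}^m\psigma(\vtx|\vx^{(i)})$, the multiplicative constant $\frac{1}{m}$ has vanishing log-gradient and may simply be dropped, equivalently it cancels between numerator and denominator. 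Since the sum is finite, I would then interchange the gradient with the summation, reducing the task to computing $\nabla_\vtx\psigma(\vtx|\vx^{(i)})$ for a single kernel.

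The one genuine computation is this single-kernel gradient. Writing $\psigma(\vtx|\vx^{(i)}) = Z^{-1}\exp(-\frac{1}{2\sigma^2}\lVert\vtx-\vx^{(i)}\rVert^2)$ with normalizing constant $Z=(2\pi)^{\xdim/2}\sigma^\xdim$ independent of $\vtx$, I would apply the chain rule. Using $\nabla_\vtx\lVert\vtx-\vx^{(i)}\rVert^2 = 2(\vtx-\vx^{(i)})$, the gradient of the exponent is $-\frac{1}{\sigma^2}(\vtx-\vx^{(i)}) = \frac{1}{\sigma^2}(\vx^{(i)}-\vtx)$. Since differentiating the exponential reproduces the kernel itself, this yields $\nabla_\vtx\psigma(\vtx|\vx^{(i)}) = \frac{1}{\sigma^2}(\vx^{(i)}-\vtx)\,\psigma(\vtx|\vx^{(i)})$.

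Substituting this expression into the numerator $\sum_{i=1}^m\nabla_\vtx\psigma(\vtx|\vx^{(i)})$ while keeping the denominator $\sum_{i=1}^m\psigma(\vtx|\vx^{(i)})$ immediately produces the claimed closed form, and the derivation can be laid out as a short chain of equalities without any estimation or limiting argument. I do not expect a real obstacle: the result is a routine application of the log-derivative identity followed by the chain rule on the quadratic exponent. The only points deserving an explicit sentence are the strict positivity of $\psigma$, which makes both the logarithm and the quotient well-defined, and the legitimacy of exchanging $\nabla_\vtx$ with the finite sum; both are immediate and require no further hypotheses beyond $\sigma>0$.
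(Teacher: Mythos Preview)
Your proposal is correct and follows essentially the same route as the paper's own proof: apply the log-derivative identity, let the constant $\tfrac{1}{m}$ cancel, push the gradient through the finite sum, and differentiate each Gaussian kernel via the chain rule on the quadratic exponent. If anything, you are slightly more explicit about the justifications (positivity of $\psigma$ and legitimacy of exchanging $\nabla_\vtx$ with a finite sum) than the paper, which simply lays out the chain of equalities.
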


\subsection{A Discussion on the Impacts of the Scaling Technique}
\label{apx:scaling_trick}

In this section, we first provide a proof to demonstrate the reason behind the non-equivalence between the scaling technique and the re-normalization of a likelihood density. Then, we present the derivation of the posterior scores based on a re-normalized likelihood density. Finally, we adopt a low-dimensional example to explain the effect of re-normalization on a posterior density.

The authors in~\citep{dhariwal2021diffusion} leveraged a scaling factor $\scalingfactor>0$ to adjust the likelihood score $\nabla_\vx\log p(\vy|\vx)$. To demonstrate the effect of such a scaling factor $\scalingfactor$, the authors claimed that there always exists a normalizing constant $Z$ such that $\alpha \nabla_\vx\log p(\vy|\vx)=\nabla_\vx \log \frac{1}{Z}p(\vy|\vx)^\scalingfactor$, where $\frac{1}{Z}p(\vy|\vx)^\scalingfactor$ is a re-normalized likelihood density. However, such an equality only holds for specific conditions, and is not applicable to general cases, as explained in \textbf{Proposition~\ref{sup:thm:scaling}}.

\begin{prop}
\label{sup:thm:scaling}
Given a constant $\scalingfactor>0$, and a probability density function (pdf) $p(\vy\vert\vx)$ that is everywhere non-zero and differentiable, there does not exist a constant $Z$ such that $\frac{1}{Z}p(\vy\vert\vx)^\scalingfactor$ is a pdf for all $\vx$ in general.
\end{prop}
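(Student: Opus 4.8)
The statement is a non-existence claim, so the plan is to extract a necessary condition on $Z$ and then display a single admissible density $p(\vy\vert\vx)$ that violates it. First I would note that, for a fixed $\vx$, if $\frac{1}{Z}p(\vy\vert\vx)^{\scalingfactor}$ is a pdf in $\vy$ then integrating over $\vy$ forces $Z = Z(\vx) := \int p(\vy\vert\vx)^{\scalingfactor}\,d\vy$, an integral that is finite and strictly positive for the families we will use. Hence a single constant $Z$ can serve all $\vx$ only if $\vx\mapsto Z(\vx)$ is constant, and it therefore suffices to exhibit one $p(\vy\vert\vx)$ — everywhere non-zero and differentiable — for which $Z(\vx)$ genuinely depends on $\vx$.

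For the counterexample I would take the scalar case $\vx,\vy\in\mathbb{R}$ and set $p(\vy\vert\vx) = \mathcal{N}\big(\vy;\,0,\,g(\vx)\big)$ with $g$ any smooth, strictly positive, non-constant function (for concreteness $g(\vx)=e^{\vx}$); this density is everywhere non-zero and differentiable, as required. A routine Gaussian computation then yields
\begin{equation*}
Z(\vx) = \int_{\mathbb{R}} \Big( (2\pi g(\vx))^{-1/2} e^{-\vy^2/(2 g(\vx))} \Big)^{\scalingfactor}\,d\vy = \scalingfactor^{-1/2}\,(2\pi g(\vx))^{(1-\scalingfactor)/2},
\end{equation*}
which depends on $\vx$ precisely when $\scalingfactor\neq 1$. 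Since the case $\scalingfactor=1$ is the degenerate one in which the scaling is the identity map (and thus not the regime of interest), this shows that no constant $Z$ can normalize $p(\vy\vert\vx)^{\scalingfactor}$ simultaneously for all $\vx$, establishing the proposition. To make the relevance explicit I would also remark that $\nabla_{\vx}\log\big(\tfrac{1}{Z(\vx)}p(\vy\vert\vx)^{\scalingfactor}\big) = \scalingfactor\,\nabla_{\vx}\log p(\vy\vert\vx) - \nabla_{\vx}\log Z(\vx)$, so the gradient identity underlying the scaling heuristic holds only when $Z$ is $\vx$-independent — exactly the situation the counterexample rules out in general.

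The argument has no genuinely hard step; the main points to get right are (i) reading the claim correctly — it asserts that ``$\exists Z$ normalizing $p^{\scalingfactor}$ for all $\vx$'' fails for \emph{some} admissible $p$, so one well-chosen family suffices — and (ii) checking that the Gaussian counterexample meets the ``everywhere non-zero and differentiable'' hypothesis and that $Z(\vx)$ is finite and positive, both of which are automatic for a Gaussian family. The only remaining computation is the one-line Gaussian integral displayed above, so I would present that family as the primary argument since it is the most transparent.
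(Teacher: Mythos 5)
Your proposal is correct, and it follows the same basic strategy as the paper's proof: observe that any valid normalizer must equal $Z(\vx)=\int p(\vy\vert\vx)^{\scalingfactor}\,d\vy$ and then exhibit an admissible family for which this quantity is not constant in $\vx$. The difference is in the choice of counterexample and in what it covers. The paper fixes $\scalingfactor=2$, takes $p(y\vert x)$ to be a two-component Gaussian mixture whose weights depend on $x$, and derives a numerical contradiction ($\frac{1}{\sqrt{2\pi}}=\frac{1}{2\sqrt{\pi}}$) by evaluating the would-be constant $Z$ at $x=-1$ and $x=1$; as written, that argument establishes the claim only for $\scalingfactor=2$. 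Your single-Gaussian family with $x$-dependent variance gives the closed form $Z(\vx)=\scalingfactor^{-1/2}(2\pi g(\vx))^{(1-\scalingfactor)/2}$ (which I have checked), so it disposes of every $\scalingfactor\neq 1$ at once and makes transparent exactly why the normalizer varies — the exponent $(1-\scalingfactor)/2$ vanishes only at $\scalingfactor=1$. You are also right to flag $\scalingfactor=1$ as a genuine exception (there $Z=1$ works for any $p$), which is the content of the paper's hedge ``in general''; neither proof can, nor needs to, cover that case. Your closing remark about $\nabla_{\vx}\log Z(\vx)$ matches the paper's subsequent discussion in Eq.~(\ref{eq:scale}). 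In short: same proof architecture, but your counterexample is cleaner and strictly more general in $\scalingfactor$.
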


\begin{proof}
The proposition can be proved by contradiction. Assume that such a constant $Z$ exists and consider the following case:
$$
\alpha=2,
$$
$$
x,y\in\sR,
$$
$$
f(y;\mu,\sigma^2)=(2\pi\sigma^2)^\frac{-1}{2}e^\frac{-(y-\mu)^2}{2\sigma^2},
$$
$$
p(y\vert x)=\frac{1}{2(x^2+1)}\left((x+1)^2 f(y;0,1)+(x-1)^2 f(y;0,\frac{1}{2})\right).
$$
Note that $p(y\vert x)$ is a pdf and is everywhere non-zero and differentiable.

Since a pdf sums to 1 (i.e., $\int_y\frac{1}{Z}p(y\vert x)^\scalingfactor d y=1,\forall x$),
by utilizing the Gaussian integral and the above assumption, the following derivation holds:
\begin{equation*}
\begin{aligned}
1&=\int_y\frac{1}{Z}p(y\vert (-1))^\scalingfactor dy=\int_y\frac{1}{Z}p(y\vert 1)^\scalingfactor dy \\
\Leftrightarrow Z&=\int_y p(y\vert (-1))^2 dy=\int_y p(y\vert 1)^2 dy \\
\Leftrightarrow Z&=\int_y f(y;0,\frac{1}{2})^2 dy=\int_y f(y;0,1)^2 dy \\
\Leftrightarrow Z&=\int_y \left((\pi)^\frac{-1}{2}e^{-y^2}\right)^2 dy=\int_y \left((2\pi)^\frac{-1}{2}e^\frac{-y^2}{2}\right)^2 dy \\
\Leftrightarrow Z&=\frac{1}{\pi}\int_y \left(e^{-y^2}\right)^2 dy=\frac{1}{2\pi}\int_y \left(e^\frac{-y^2}{2}\right)^2 dy \\
\Leftrightarrow Z&=\frac{1}{\pi}\int_y e^{-2y^2} dy=\frac{1}{2\pi}\int_y e^{-y^2} dy \\
\Leftrightarrow Z&=\frac{1}{\sqrt{2}\pi}\int_u e^{-u^2} du=\frac{1}{2\pi}\int_y e^{-y^2} dy \\
\Leftrightarrow Z&=\frac{1}{\sqrt{2}\pi}\sqrt{\pi}=\frac{1}{2\pi}\sqrt{\pi} \\
\Leftrightarrow Z&=\frac{1}{\sqrt{2\pi}}=\frac{1}{2\sqrt{\pi}}
\end{aligned}
\end{equation*}
The final equation leads to a contradiction for the value of $Z$, which completes the proof of \textbf{Proposition~\ref{sup:thm:scaling}}.
Therefore, we concluded that there does not exist a constant $Z$ such that $\frac{1}{Z}p(\vy\vert\vx)^\scalingfactor$ is a pdf for all $\vx$ in general.
\end{proof}

To the best of our knowledge, the scaling technique still lacks a theoretical justification. However, if $Z$ is allowed to be a function of $\vx$ rather than a constant, re-normalization of the likelihood density becomes feasible. In other words, setting $Z(\vx)=\int_\vy p(\vy\vert\vx)^\scalingfactor d\vy$ enables $\int_\vy\frac{1}{Z(\vx)}p(\vy\vert\vx)^\scalingfactor d\vy=1$, where $\frac{1}{Z(\vx)}p(\vy\vert\vx)^\scalingfactor \triangleq p_{\scalingfactor}(\vy\vert\vx)$ is a re-normalized likelihood density. In this case, the score of the re-normalized likelihood density can be written as,
\begin{equation}
\label{eq:scale}
\begin{aligned}
\nabla_\vx\log p_{\scalingfactor}(\vy\vert\vx) &=\nabla_\vx\log\left( \frac{1}{Z(\vx)}p(\vy\vert\vx)^\scalingfactor\right) \\
&=\nabla_\vx \Big( \log p(\vy|\vx)^\scalingfactor-\log Z(\vx) \Big)\\
&=\nabla_\vx \log p(\vy|\vx)^\scalingfactor- \nabla_\vx \log Z(\vx) \\
&=\scalingfactor \nabla_\vx \log p(\vy|\vx)- \nabla_\vx \log Z(\vx).
\end{aligned}
\end{equation}
Given the joint density $p_{\scalingfactor}(\vx,\vy) \triangleq p_{\scalingfactor}(\vy\vert\vx)p(\vx)$, the posterior of the re-normalized density can be derived by Bayes' Theorem, i.e., $p_{\scalingfactor}(\vx|\vy)=p_{\scalingfactor}(\vy\vert\vx)p(\vx)/{\int_\vx p_{\scalingfactor}(\vx,\vy) d\vx}$. Based on Eq.~(\ref{eq:bayesian}), the posterior score of the re-normalized density can be written as:
\begin{equation}
\label{eq:scale_post}
\begin{aligned}
\nabla_\vx \log  p_{\scalingfactor}(\vx|\vy) &= \nabla_\vx \log  p_{\scalingfactor}(\vy|\vx) + \nabla_\vx\log p(\vx)\\
&=\scalingfactor \nabla_\vx \log p(\vy|\vx)- \nabla_\vx \log Z(\vx) + \nabla_\vx\log p(\vx). \\
\end{aligned}
\end{equation}
Through the above equation, conditional sampling based on the re-normalized likelihood density becomes feasible.

\begin{figure}[t]
    \centering
    \includegraphics[width=\linewidth]{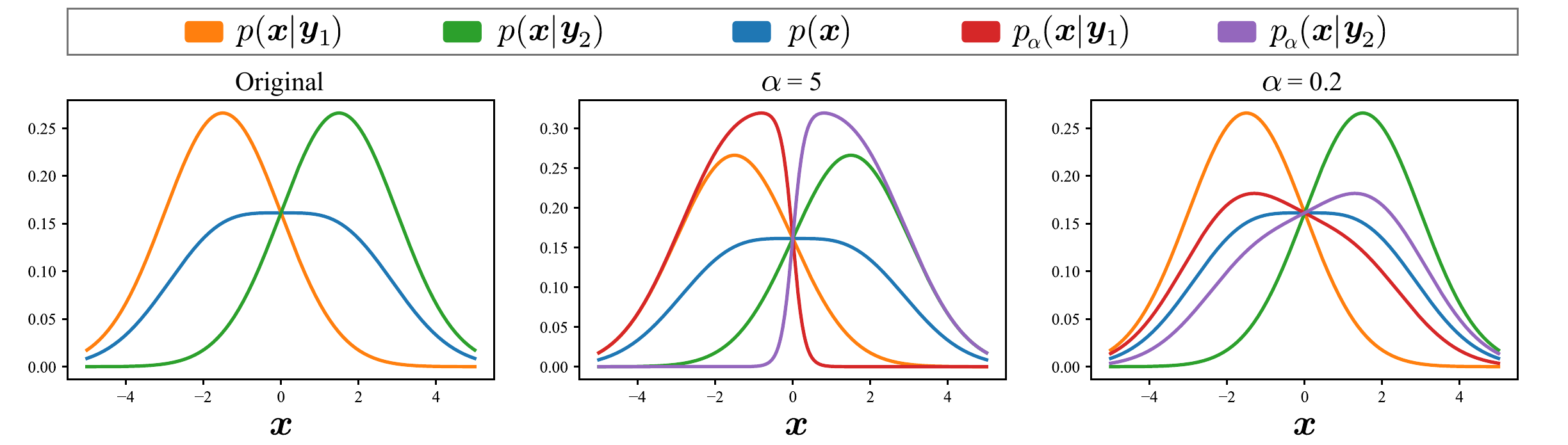}
    \caption{The original and the renormalized densities under different choices of $\scalingfactor$.}
    \label{fig:distribution}
\end{figure}

To further investigate the effect of re-normalization on $p_{\scalingfactor}(\vx|\vy)$, we provide an illustrative example in Fig.~\ref{fig:distribution}. In this example, the condition $\vy$ is assumed to be sampled from $\{ \vy_1, \vy_2\}$ with equal probability $1/2$. The original posterior densities $p(\vx|\vy_1)$ and $p(\vx|\vy_2)$ are normal distributions with variance $2.25$ centered at $\vx=-1.5$ and $\vx=1.5$, which are plotted in orange and green, respectively. On the other hand, the prior density $p(\vx)\triangleq \frac{1}{2}p(\vx|\vy_1)+\frac{1}{2}p(\vx|\vy_2)$ is plotted in blue, and the likelihood density $p(\vy|\vx)= p(\vx|\vy)p(\vy)/p(\vx)$ is defined according to Bayes' theorem.

Subsequently, the re-normalized posterior densities $\tp(\vx|\vy_1)$ and $\tp(\vx|\vy_2)$ can be constructed based on the equation $\frac{1}{Z(\vx)}p(\vy_1 \vert\vx)^\scalingfactor$, $\frac{1}{Z(\vx)}p(\vy_2 \vert\vx)^\scalingfactor$, and Bayes' theorem. In Fig.~\ref{fig:distribution}, it is observed that under the case that $\scalingfactor=5$, the re-normalized posterior densities are more concentrated. Conversely, when $\scalingfactor=0.2$, the re-normalized posterior densities are much more dispersed. These results therefore suggest that the re-normalized posterior density can be adjusted by controlling the value of $\alpha$.

\begin{figure}[t]
    \centering
    \includegraphics[width=\linewidth]{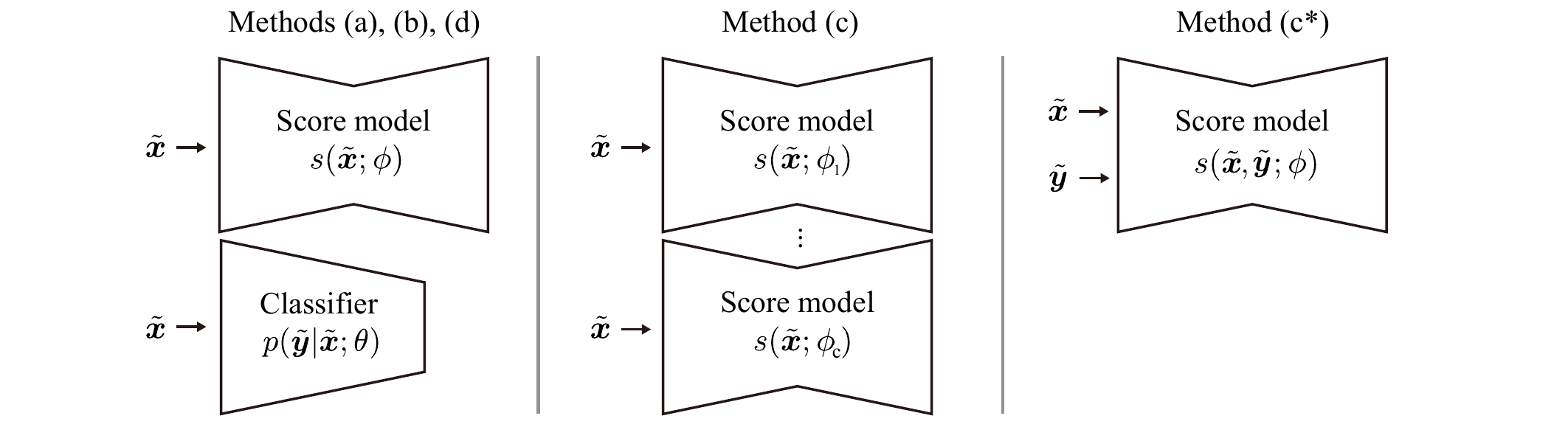}
    \caption{An illustration for different conditional score-based data generation methods in Section~\ref{apx:conditional_frameworks}. }
    \label{fig:methods}
\end{figure}

\subsection{Conditional Score-based Data Generation Methods}
\label{apx:conditional_frameworks}
As described in Section~\ref{sec:analysis}, methods (a), (b), and (d) utilize a classifier $p(\vty|\vtx;\theta)$ and a score model $\scoremodel(\vtx;\phi)$ to estimate the posterior scores. In contrast, method (c) separately trains the posterior score models $\scoremodel(\vtx;\phi_1)$ and $\scoremodel(\vtx;\phi_2)$ for different class conditions. Although method (c) achieves better evaluation results, it requires training score models for each possible $\vty$, which is usually considered impractical in real-world applications. A potential solution to this problem is to train a score model $\scoremodel(\vtx,\vty;\phi)$ that can estimate the scores for the joint probability of $\vtx$ and $\vty$ (denoted as `Method (c$^*$)' in Fig.~\ref{fig:methods}). This method, however, is inflexible in comparison to methods (a), (b), and (d), since any change on the condition $\vty$ could lead to re-training the joint score model $\scoremodel(\vtx,\vty;\phi)$. Therefore, in this work, we only compare the performance of methods (a), (b), and (d), and consider method (c) to be another orthogonal research direction.

\subsection{A Proof for Denoising Likelihood Score Matching}
\label{apx:proof:dlsm}
In Section~\ref{sec:likelihood_denoising_score_matcing}, we stated the equivalence between $L_\mathrm{ELSM}(\theta)$ and $L_\mathrm{DLSM}(\theta)$ when optimizing $\theta$. To demonstrate how such an equivalence stands, we provide the theoretical proof as follows.
\newcommand\numeq[1]%
  {\stackrel{\scriptscriptstyle(\mkern-1.5mu#1\mkern-1.5mu)}{=}}
  
\begin{theorem}
$L_\mathrm{DLSM}(\theta) = L_\mathrm{ELSM}(\theta) + C$, where $C$ is a constant with respect to $\theta$. 

\begin{proof}
$L_\mathrm{ELSM}(\theta)$ and $L_\mathrm{DLSM}(\theta)$ are defined as follows:
\begin{equation*}
\begin{aligned}
    L_\mathrm{ELSM}(\theta) = \E_{\pjoint(\vtx,\vty)}\left[ \frac{1}{2} \lVert\nabla_\vtx\log p(\vty|\vtx;\theta) - \nabla_\vtx\log\pjoint(\vty|\vtx) \rVert^2 \right].
\end{aligned}
\end{equation*}
\begin{equation*}
\begin{aligned}
    L_\mathrm{DLSM}(\theta) = \E_{\pjoint(\vtx,\vx,\vty,\vy)}\left[ \frac{1}{2} \lVert\nabla_\vtx\log p(\vty|\vtx;\theta) + \nabla_\vtx\log\psigma(\vtx) - \nabla_\vtx\log \psigma(\vtx|\vx) \rVert^2 \right].
\end{aligned}
\end{equation*}

The proof expands $L_\mathrm{ELSM}(\theta)$ and $L_\mathrm{DLSM}(\theta)$ individually, leading to constant difference.

\begin{equation*}
\begin{aligned}
L_\mathrm{ELSM}(\theta) &= \E_{\pjoint(\vtx,\vty)}\left[\frac{1}{2} \lVert\nabla_\vtx\log p(\vty|\vtx;\theta) - \nabla_\vtx\log\pjoint(\vty|\vtx) \rVert^2 \right] \\
&= \E_{\pjoint(\vtx,\vty)}\left[\frac{1}{2} \lVert \nabla_\vtx\log p(\vty|\vtx;\theta) \rVert^2 \right] - F(\theta) + C_1,\text{ where} \\
C_1&=\ \E_{\pjoint(\vtx,\vty)}\left[\frac{1}{2} \lVert \nabla_\vtx\log\pjoint(\vty|\vtx) \rVert^2\right]\text{ is a constant, and} \\
F(\theta) =&\ \E_{\pjoint(\vtx,\vty)}\left[\langle\nabla_\vtx\log p(\vty|\vtx;\theta), \nabla_\vtx\log\pjoint(\vty|\vtx) \rangle\right] \\
=&\ \E_{\pjoint(\vtx,\vty)}\left[\langle\nabla_\vtx\log p(\vty|\vtx;\theta), \nabla_\vtx\log\pjoint(\vtx|\vty) - \nabla_\vtx\log\psigma(\vtx) \rangle\right] \\
=&\ G(\theta) - \E_{\pjoint(\vtx,\vty)}\left[\langle\nabla_\vtx\log p(\vty|\vtx;\theta), \nabla_\vtx\log\psigma(\vtx) \rangle\right],\text{ where} \\
G(\theta) =&\ \E_{\pjoint(\vtx,\vty)}\left[\langle\nabla_\vtx\log p(\vty|\vtx;\theta), \nabla_\vtx\log\pjoint(\vtx|\vty) \rangle\right] \\
=&\ \int_\vtx \int_\vty \pjoint(\vtx,\vty) \langle\nabla_\vtx\log p(\vty|\vtx;\theta), \nabla_\vtx\log\pjoint(\vtx|\vty) \rangle d\vty d\vtx \\
=&\ \int_\vtx \int_\vty \ptau(\vty)\pjoint(\vtx|\vty) \langle \nabla_\vtx\log p(\vty|\vtx;\theta), \frac{\nabla_\vtx\pjoint(\vtx|\vty)}{\pjoint(\vtx|\vty)} \rangle d\vty d\vtx \\
=&\ \int_\vtx \int_\vty \ptau(\vty) \langle\nabla_\vtx\log p(\vty|\vtx;\theta), \nabla_\vtx\pjoint(\vtx|\vty) \rangle d\vty d\vtx \\
=&\ \int_\vtx \int_\vty \ptau(\vty) \langle\nabla_\vtx\log p(\vty|\vtx;\theta), \nabla_\vtx \int_\vx p_{0,\tau}(\vx|\vty)\pjoint(\vtx|\vx,\vty) d\vx \rangle d\vty d\vtx \\
=&\ \int_\vtx \int_\vty \ptau(\vty) \langle\nabla_\vtx\log p(\vty|\vtx;\theta), \nabla_\vtx \int_\vx \int_\vy p_{0,\tau}(\vx|\vty)p_{0,\tau}(\vy|\vx,\vty)\pjoint(\vtx|\vx,\vty,\vy) d\vy d\vx \rangle d\vty d\vtx \\
=&\ \int_\vtx \int_\vty \ptau(\vty) \langle\nabla_\vtx\log p(\vty|\vtx;\theta), \int_\vx \int_\vy p_{0,\tau}(\vx|\vty)p_{0,\tau}(\vy|\vx,\vty) \nabla_\vtx\pjoint(\vtx|\vx,\vty,\vy) d\vy d\vx \rangle d\vty d\vtx \\
\end{aligned}
\end{equation*}
\begin{equation*}
\begin{aligned}
=&\ \int_\vtx \int_\vty \int_\vx \int_\vy \ptau(\vty)p_{0,\tau}(\vx|\vty)p_{0,\tau}(\vy|\vx,\vty)\pjoint(\vtx|\vx,\vty,\vy) \langle\nabla_\vtx\log p(\vty|\vtx;\theta),  \nabla_\vtx\log\pjoint(\vtx|\vx,\vty,\vy) \rangle d\vy d\vx d\vty d\vtx \\
=&\ \int_\vtx \int_\vty \int_\vx \int_\vy \pjoint(\vtx,\vx,\vty,\vy) \langle\nabla_\vtx\log p(\vty|\vtx;\theta),  \nabla_\vtx\log\pjoint(\vtx|\vx,\vty,\vy) \rangle d\vy d\vx d\vty d\vtx \\
=&\ \int_\vtx \int_\vty \int_\vx \int_\vy \pjoint(\vtx,\vx,\vty,\vy) \langle\nabla_\vtx\log p(\vty|\vtx;\theta),  \nabla_\vtx\log\pjoint(\vtx,\vx,\vty,\vy) \rangle d\vy d\vx d\vty d\vtx \\
=&\ \int_\vtx \int_\vty \int_\vx \int_\vy \pjoint(\vtx,\vx,\vty,\vy) \langle\nabla_\vtx\log p(\vty|\vtx;\theta),  \nabla_\vtx\log(\psigma(\vtx|\vx)\ptau(\vty|\vy)p_{0,0}(\vx,\vy)) \rangle d\vy d\vx d\vty d\vtx \\
=&\ \int_\vtx \int_\vty \int_\vx \int_\vy \pjoint(\vtx,\vx,\vty,\vy) \langle\nabla_\vtx\log p(\vty|\vtx;\theta),  \nabla_\vtx\log\psigma(\vtx|\vx) \rangle d\vy d\vx d\vty d\vtx \\
=&\ \E_{\pjoint(\vtx,\vx,\vty,\vy)}\left[\langle\nabla_\vtx\log p(\vty|\vtx;\theta), \nabla_\vtx\log\psigma(\vtx|\vx) \rangle\right]\text{, and now we expand }L_\mathrm{DLSM}(\theta):
\end{aligned}
\end{equation*}
\begin{equation*}
\begin{aligned}
L_\mathrm{DLSM}(\theta) =& \E_{\pjoint(\vtx,\vx,\vty,\vy)}\left[\frac{1}{2} \lVert\nabla_\vtx\log p(\vty|\vtx;\theta) + \nabla_\vtx\log\psigma(\vtx) - \nabla_\vtx\log \psigma(\vtx|\vx) \rVert^2\right] \\
=&\ \E_{\pjoint(\vtx,\vty)}\left[\frac{1}{2} \lVert\nabla_\vtx\log p(\vty|\vtx;\theta)\rVert^2\right] - \E_{\psigma(\vtx,\vx,\vty,\vy)}\left[\langle\nabla_\vtx\log p(\vty|\vtx;\theta), \nabla_\vtx\log\psigma(\vtx|\vx) \rangle\right] + \\ 
&\ \E_{\pjoint(\vtx,\vty)}\left[\langle\nabla_\vtx\log p(\vty|\vtx;\theta), \nabla_\vtx\log\psigma(\vtx) \rangle\right] + C_2,\text{ where} \\
C_2 =&\ \E_{\psigma(\vtx,\vx)}\left[\frac{1}{2}\rVert\nabla_\vtx\log\psigma(\vtx)\lVert^2\right] + \E_{\psigma(\vx,\vtx)}\left[\frac{1}{2} \lVert\nabla_\vtx\log \psigma(\vtx|\vx)\rVert^2\right] - \\
&\ \E_{\psigma(\vtx,\vx)}\left[\langle \nabla_\vtx\log\psigma(\vtx), \nabla_\vtx\log \psigma(\vtx|\vx) \rangle\right], \text{is a constant, leading to the conclusion} \\
L_\mathrm{DLSM}(\theta)=&\ L_\mathrm{ELSM}(\theta)-C_1+C_2\\
\end{aligned}
\end{equation*}

Note that the domain of the integrals is $\{(\vtx,\vx,\vty,\vy): \pjoint(\vtx,\vx,\vty,\vy)\ne 0\}$ by the definition of expected value.

\end{proof}
\end{theorem}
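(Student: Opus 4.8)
The plan is to mimic Vincent's denoising score matching argument: expand both quadratic objectives, discard the $\theta$-independent square terms as constants, and show that the remaining $\theta$-dependent cross terms coincide. To keep the bookkeeping manageable I would abbreviate $A \triangleq \nabla_\vtx\log p(\vty|\vtx;\theta)$, $B \triangleq \nabla_\vtx\log\psigma(\vtx)$, $E \triangleq \nabla_\vtx\log\psigma(\vtx|\vx)$, and $D \triangleq \nabla_\vtx\log\pjoint(\vty|\vtx)$. Expanding the square in $L_\mathrm{ELSM}$ yields $\E_{\pjoint(\vtx,\vty)}[\tfrac12\|A\|^2] - \E_{\pjoint(\vtx,\vty)}[\langle A,D\rangle] + C_1$, where $C_1 \triangleq \E_{\pjoint(\vtx,\vty)}[\tfrac12\|D\|^2]$ does not depend on $\theta$. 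By the Bayes decomposition in Eq.~(\ref{eq:bayesian}) we have $D = \nabla_\vtx\log\pjoint(\vtx|\vty) - B$ (the $\ptau(\vty)$ term vanishes under $\nabla_\vtx$), so the cross term splits as $G(\theta) - \E[\langle A,B\rangle]$ with $G(\theta) \triangleq \E_{\pjoint(\vtx,\vty)}[\langle A, \nabla_\vtx\log\pjoint(\vtx|\vty)\rangle]$. Expanding the square in $L_\mathrm{DLSM}$ and marginalizing out the variables on which $A$ and $B$ do not depend gives $\E[\tfrac12\|A\|^2] + \E[\langle A,B\rangle] - \E_{\pjoint(\vtx,\vx,\vty,\vy)}[\langle A,E\rangle] + C_2$, where $C_2$ collects the $\tfrac12\|B\|^2$, $\tfrac12\|E\|^2$, and $-\langle B,E\rangle$ terms, all independent of $\theta$. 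Comparing the two expansions, the theorem reduces to the single identity $G(\theta) = \E_{\pjoint(\vtx,\vx,\vty,\vy)}[\langle A,E\rangle]$.

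Establishing that identity is the heart of the argument, and the step I expect to be the main obstacle. The idea is to reintroduce the latent clean variables $\vx$ and $\vy$ into $\pjoint(\vtx|\vty)$. First I would use $\pjoint(\vtx,\vty) = \ptau(\vty)\,\pjoint(\vtx|\vty)$ so that $\pjoint(\vtx,\vty)\,\nabla_\vtx\log\pjoint(\vtx|\vty) = \ptau(\vty)\,\nabla_\vtx\pjoint(\vtx|\vty)$, eliminating the denominator. Then I would write $\pjoint(\vtx|\vty)$ as the marginal $\int\!\int p_{0,\tau}(\vx|\vty)\,p_{0,\tau}(\vy|\vx,\vty)\,\pjoint(\vtx|\vx,\vty,\vy)\,d\vy\,d\vx$, interchange $\nabla_\vtx$ with the integral, and recombine via $\nabla_\vtx\pjoint(\vtx|\vx,\vty,\vy) = \pjoint(\vtx|\vx,\vty,\vy)\,\nabla_\vtx\log\pjoint(\vtx|\vx,\vty,\vy)$. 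Finally, the factorization $\pjoint(\vtx,\vx,\vty,\vy) = \psigma(\vtx|\vx)\ptau(\vty|\vy)p_{0,0}(\vx,\vy)$ shows $\nabla_\vtx\log\pjoint(\vtx|\vx,\vty,\vy) = \nabla_\vtx\log\psigma(\vtx|\vx) = E$, since the remaining factors are free of $\vtx$; absorbing $\ptau(\vty)\,p_{0,\tau}(\vx|\vty)\,p_{0,\tau}(\vy|\vx,\vty)\,\pjoint(\vtx|\vx,\vty,\vy)$ back into $\pjoint(\vtx,\vx,\vty,\vy)$ then gives exactly $\E_{\pjoint(\vtx,\vx,\vty,\vy)}[\langle A,E\rangle]$. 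The delicate points are justifying the differentiation-under-the-integral step (which relies on the Gaussian smoothing kernels to supply enough decay and regularity) and tracking precisely which conditional densities depend on $\vtx$; restricting every integral to the support $\{(\vtx,\vx,\vty,\vy):\pjoint(\vtx,\vx,\vty,\vy)\neq 0\}$, as noted after the statement, keeps all the log-gradients and divisions well defined.

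Finally I would assemble the pieces: $L_\mathrm{ELSM}(\theta) = \E[\tfrac12\|A\|^2] - G(\theta) + \E[\langle A,B\rangle] + C_1$ and $L_\mathrm{DLSM}(\theta) = \E[\tfrac12\|A\|^2] + \E[\langle A,B\rangle] - G(\theta) + C_2$, so that $L_\mathrm{DLSM}(\theta) = L_\mathrm{ELSM}(\theta) - C_1 + C_2$, and the claimed constant is $C \triangleq C_2 - C_1$, which is manifestly independent of $\theta$. A small sanity check worth recording is that $C_1$ and $C_2$ are finite (guaranteed by $\sigma,\tau>0$, which makes all the score norms integrable), so the rearrangement of terms is legitimate.
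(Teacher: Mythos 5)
Your proposal matches the paper's proof essentially step for step: the same expansion of both quadratic objectives into a shared $\theta$-dependent part plus constants, the same use of the Bayes decomposition to split the ELSM cross term, and the same key identity $G(\theta)=\E_{\pjoint(\vtx,\vx,\vty,\vy)}[\langle A,E\rangle]$ established by reintroducing the clean latents, differentiating under the integral, and exploiting the factorization $\pjoint(\vtx,\vx,\vty,\vy)=\psigma(\vtx|\vx)\ptau(\vty|\vy)p_{0,0}(\vx,\vy)$. The argument is correct and your remarks on integrability and the support restriction only add care the paper leaves implicit.
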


\paragraph{Remark.}~Theorem 1 also holds for the clean label $\vy$, which can be shown as follows:
\begin{equation*}
\begin{aligned}
    & \E_{\psigma(\vtx,\vy)}\left[ \frac{1}{2} \lVert\nabla_\vtx\log p(\vy|\vtx;\theta) - \nabla_\vtx\log\psigma(\vy|\vtx) \rVert^2 \right] \\
    &= \E_{\psigma(\vtx,\vx,\vy)}\left[ \frac{1}{2} \lVert\nabla_\vtx\log p(\vy|\vtx;\theta) + \nabla_\vtx\log\psigma(\vtx) - \nabla_\vtx\log \psigma(\vtx|\vx) \rVert^2 \right]+C.
\end{aligned}
\end{equation*}

\subsection{The Detailed Experimental Configurations}
\label{apx:config}
In this section, we elaborate on the implementation details and the experimental setups. In Section~\ref{apx:config:noise_and_sampling}, we describe the noise perturbation method and the sampling algorithm adopted in this work. In Sections~\ref{apx:config:toy_experiments} and~\ref{apx:config:benchmark}, we present the training configurations for the experiments on the inter-twinning moon, Cifar-10, and Cifar-100 datasets. The code implementation for these experiments is provided in the Github repository (\url{https://github.com/chen-hao-chao/dlsm}).

\subsubsection{Noise Perturbation and Sampling Algorithm}
\label{apx:config:noise_and_sampling}
\paragraph{Sampling Algorithm.}~In this work, we adopt the PC sampler (VE-SDE) identical to that in~\citep{song2021scorebased}, which is a time-inhomogeneous sampling algorithm. The PC sampler (VE-SDE) is described in \textbf{Algorithm 1}. In \textbf{Algorithm 1}, a noise-conditioned score model $\scoremodel(\vtx;\phi,\sigma_t)$ is adopted, where $\vtx$ is perturbed by noises sampled from Gaussian distributions using different standard deviations $\sigma_t$. The standard deviations of the noises, denoted as $\sigma_t=\sigma_{\text{min}} (\frac{\sigma_{\text{max}}}{\sigma_{\text{min}}})^{\frac{t}{T}}$, are determined by the time step $t$ in \textbf{Algorithm 1}. We set $\sigma_{\text{min}}=0.01$ for all of our experiments. On the other hand, we set $\sigma_{\text{max}}=50$ for the Cifar-10 and Cifar-100 datasets, and $\sigma_{\text{max}}=10$ for the inter-twinning moom dataset. As for the total time steps, the value of $T$ is set to 1,000.

\paragraph{Training with Different Standard Deviations.}~Since the PC sampler (VE-SDE) described in \textbf{Algorithm 1} requires the score model $\scoremodel(\vtx;\phi,\sigma_t)$ conditioned on different values of standard deviation $\sigma_t$, the training objectives described in Eqs.~(\ref{eq:dsm}) and (\ref{eq:total_loss}) are modified accordingly. For the DSM loss described in Eq.~(\ref{eq:dsm}), the modified loss $L_{\mathrm{DSM}}(\phi) $ is represented as: 
\begin{equation} 
\label{eq:dsm_mod}
\begin{aligned}
\E_{\U_{[0,T]}(t)} \left[  \lambda_{\mathrm{DSM}}(t) \E_{p_{\sigma_t}(\vtx,\vx)} \left[ \lVert\scoremodel(\vtx;\phi,\sigma_t) - \nabla_\vtx\log p_{\sigma_t}(\vtx|\vx)\rVert^2 \right]\right],
\end{aligned}
\end{equation}
where $\U_{[0,T]}(t)$ is a uniform distribution on the interval $[0,T]$. In Eq.~(\ref{eq:dsm_mod}), The additional coefficient $\lambda_{\mathrm{DSM}}(t)$ is multiplied with the original DSM loss for better training results~\citep{song2019generative}. For the approximated DLSM loss described in Eq.~(\ref{eq:dlsm}), the modified loss $L_\mathrm{DLSM'}(\theta)$ is represented as: 
\begin{equation}
\begin{aligned}
     \E_{\U_{[0,T]}(t)} \left[ \lambda_{\mathrm{DLSM}} (t) \E_{p_{\sigma_t,\tau}(\vtx,\vx,\vty,\vy)}\left[\lVert\nabla_\vtx\log p(\vty|\vtx;\theta,\sigma_t) + \scoremodel(\vtx;\phi,\sigma_t) - \nabla_\vtx\log p_{\sigma_t}(\vtx|\vx) \rVert^2\right] \right],
\end{aligned}
\end{equation}
where $ p(\vty|\vtx;\theta,\sigma_t)$ is a noise-conditioned classifier. $\lambda_{\mathrm{DLSM}}(t)$ is the balancing coefficient for different $\sigma_t$. For the cross-entropy loss, the modified loss $L_\mathrm{CE}(\theta)$ is represented as:
\begin{equation}
\begin{aligned}
    \E_{\U_{[0,T]}(t)} \left[ \lambda_{\mathrm{CE}}(t) \E_{p_{\sigma_t,\tau}(\vtx,\vty)}[-\log p(\vty|\vtx;\theta,\sigma_t)] \right],
\end{aligned}
\end{equation}
where $\lambda_{\mathrm{CE}}(t)$ is the balancing coefficient for different $\sigma_t$. In all of our experiments, $\lambda_{\mathrm{CE}}(t)$ is set to $1$. The values of $\lambda_{\mathrm{DSM}}(t)$ and $\lambda_{\mathrm{DLSM}}(t)$ are customized according to different settings, and specified in Sections~\ref{apx:config:toy_experiments} and \ref{apx:config:benchmark}. Please note that $\lambda_{\mathrm{DSM}}(t)$, $\lambda_{\mathrm{DLSM}}(t)$, and $\lambda_{\mathrm{CE}}(t)$ are adopted to balance the loss functions for different $\sigma_t$, and are different from the balancing coefficient $\lambda$ in Eq.~(\ref{eq:total_loss}).

\begin{algorithm}[t]
\caption{The PC Sampler (VE-SDE)}
\begin{algorithmic}[1]
  \STATE $\vtx = \N(\mathbf{0}, \sigma_{\text{max}}^2\mI)$
  \STATE \textbf{for} $t=T-1$ \textbf{to} $0$ \textbf{do}
      \STATE \hspace{1em} \textcolor{gray}{/* Predictor */}
      \STATE \hspace{1em} $\z \sim \N(\mathbf{0}, \mI)$
      \STATE \hspace{1em} $\vtx_t \leftarrow \vtx_{t} + (\sigma_{t+1}^2-\sigma_{t}^2)\,\scoremodel(\vtx_{t+1};\phi,\sigma_{t+1}) + \sqrt{\sigma^2_{t+1} - \sigma_t^2}\,\z$
      \STATE \hspace{1em} \textcolor{gray}{/* Corrector */}
      \STATE \hspace{1em} $\z \sim \N(\mathbf{0}, \mI)$
      \STATE \hspace{1em} $\vtx_t \leftarrow \vtx_t + \epsilon_t\,\scoremodel(\vtx_{t};\phi,\sigma_t) - \sqrt{2\epsilon_t} \z$
    \STATE return $\vtx_{0}$
\end{algorithmic}
\end{algorithm}

\subsubsection{The Experiments on the Inter-twinning Moon Dataset}
\label{apx:config:toy_experiments}
\paragraph{Dataset.}~We perform our motivational experiments on the inter-twinning moon dataset provided by the \texttt{sklearn} library in \texttt{python}. In this dataset, the data points are sampled from two interleaving crescents. The data points of the upper crescent are generated based on $(\cos(x),\sin(x)),\forall x \in [0,\pi]$ and are labeled as $c_1$. On the other hand, the data points of the lower crescent are generated based on $(1-\cos(x),0.5-\sin(x)),\forall x \in [0,\pi]$ and are labeled as $c_2$. Before training, the data points are first normalized such that their mean become 0 and then their coordinates are scaled by a factor of 20. 

\paragraph{Setups and Evaluation Method.}~In this experiment, the scores $\nabla_\vtx\log\psigma(\vtx)$ and $\nabla_\vtx\log\psigma(\vtx|\vty)$ are obtained by calculating Eq.~(\ref{eq:oracle}). The likelihood scores are computed according to $\nabla_\vtx\log\psigma(\vty|\vtx)=\nabla_\vtx\log\psigma(\vtx|\vty)-\nabla_\vtx\log\psigma(\vtx)$ in Eq.~(\ref{eq:bayesian}). As for the $D_P$ and $D_L$ metrics in Table~\ref{tab:toy_experiment}, the errors between two score functions are measured using 1,225 data points uniformly sampled from the two-dimensional subspace $[-25,25]\times[-40,40]$. The scaling factor $\alpha$ adopted in method (b) is set to 10. The values of $\lambda_{\mathrm{DSM}}(t)$ and $\lambda_{\mathrm{DLSM}}(t)$ are both set to $\frac{1}{\sigma_t^4}$, and the value of $\lambda$ is set to 0.125.

\paragraph{Training and Implementation Details.}~The network architectures for the score model and the classifier are simple three-layer multilayer perceptrons (MLPs) with ReLu as the activation function. These networks are implemented using the \texttt{pytorch} library with \texttt{nn.Linear} layers that contain (128, 64, 32) neurons per layer for both of the score model and the classifier. The score model is trained using the Adam optimizer with a learning rate of $6.5\times 10^{-4}$ and a batch size of 4,000 for 40,000 iterations. The classifier is trained using the Adam optimizer with a learning rate of $2.0\times 10^{-5}$ and a batch size of 4,000 for 40,000 iterations.

\subsubsection{The Experiments on the Cifar-10 and Cifar-100 Datasets}
\label{apx:config:benchmark}
\paragraph{Dataset.}~We adopt the Cifar-10 and Cifar-100 datasets~\citep{krizhevsky2009learning} for evaluating our design on image generation tasks. Cifar-10 consists of 60,000 $32\times32$ RGB images with 10 classes, where each class contains 6,000 images. Cifar-100 is similar to Cifar-10, except it has 100 classes, where each class contains 600 images. The images are first normalized to $[-1,1]^d$ before training.

\paragraph{Setups and Evaluation Method.}~The results on the CAS metric are evaluated using a classifier (ResNet-50~\citep{he2016deep}) trained with the generated samples. This classifier is trained to minimize the cross-entropy loss with the Adam optimizer using a learning rate of $2.0\times 10^{-4}$ with a batch size of 150. The results on the P / R / D / C metrics are calculated using the officially released \href{https://github.com/clovaai/generative-evaluation-prdc}{\underline{code}}. The hyperparameter $k$ used in the P / R / D / C metrics is fixed to 3. The results on the FID and IS metrics are computed on 50,000 samples using the \texttt{tensorflow$\_$gan} library in \texttt{python}. The value of $\lambda_{\mathrm{DSM}}(t)$ is set to $\frac{1}{\sigma_t^2}$, while the value of $\lambda_{\mathrm{DLSM}}(t)$ is set to $1$. The value of $\lambda$ is set to 1.

\paragraph{Training and Implementation Details.}~The network architecture for the score model is the same as~\citep{song2021scorebased}. The network architecture for the classifier is a modified ResNet-18~\citep{he2016deep} for Cifar-10 and a modified ResNet-34 for Cifar-100. The modification only involves an additional conditional branch for encoding the standard deviation $\sigma_t$. The score model is trained with the Adam optimizer using a learning rate of $2.0\times 10^{-4}$ and a batch size of 128. The classifier is trained with the Adam optimizer using a learning rate of $2.0\times 10^{-4}$ and a batch size of 100. 

\subsection{Additional Experiments}
\label{apx:additional_exp}
In this section, we present a number of additional experimental results and provide discussions on them. We first report the results of our method using different values of $\lambda$ on Cifar-10 to demonstrate that $L_\mathrm{Total}$ is less sensitive to the choices of $\lambda$. Next, we apply the scaling technique on both the base method and our method with different choices of $\alpha$ to demonstrate the influence of the hyperparameter $\alpha$ on the evaluation results of several metrics. Finally, we provide uncurated examples on the Cifar-10 and Cifar-100 datasets.

\begin{table*}[t]
\renewcommand{\arraystretch}{1.5}
\newcommand{\boldtoprule}{\toprule[2.7pt]}
\centering
\huge
\caption{The experimental results on the inter-twining moon dataset, where the classifiers are trained with $L_\mathrm{CE}$, $L_\mathrm{DLSM'}$, and $L_\mathrm{Total}$, respectively. The errors of the score functions for different methods are measured using $\E_{U(\vtx)}[D_{P}(\vtx,\vty)]$ and $\E_{U(\vtx)}[D_{L}(\vtx,\vty)]$, where $U(\vtx)$ is a uniform distribution over a two-dimensional subspace, $\vty$ represents the classes specified in the second row of the table, and $D_P$ and $D_L$ are defined in Eq.~(\ref{eq:dp}). The arrow symbol $\downarrow$ indicates that lower values correspond to better performances.}
\resizebox{0.54\linewidth}{!}{%
\begin{tabular}{c|c|cc|cc|cc}
    \boldtoprule
                       & &  \multicolumn{2}{c|}{$L_\mathrm{CE}$} & \multicolumn{2}{c|}{$L_\mathrm{DLSM'}$} & \multicolumn{2}{c}{$L_\mathrm{Total}$}\\ 
    \hline 
    $\vty$                           &  & $c_1$ &  $c_2$ & $c_1$ & $c_2$  & $c_1$ & $c_2$ \\ \hline 
    \hline 
    $\E_{U(\vtx)}[D_{P}(\vtx,\vty)]$ & $\downarrow$ & 0.198 & 0.197  & \textbf{0.066} & 0.065 & \textbf{0.066} & \textbf{0.064} \\
    $\E_{U(\vtx)}[D_{L}(\vtx,\vty)]$ & $\downarrow$ & 0.190 & 0.181  & 0.053 & 0.053 & \textbf{0.052} & \textbf{0.052} \\
    \boldtoprule
\end{tabular}}
\label{tab:toy_experiment_ablation}
\end{table*}
\subsubsection{Ablation Analysis on the Inter-twinning Moon Dataset}
\label{apx:additional_exp:abation}
Table~\ref{tab:toy_experiment_ablation} compares the evaluation results on the $\E_{U(\vtx)}[D_{P}(\vtx,\vty)]$ and $\E_{U(\vtx)}[D_{L}(\vtx,\vty)]$ metrics when the classifiers are trained with $L_\mathrm{CE}$, $L_\mathrm{DLSM'}$, and $L_\mathrm{Total}$, respectively. It is observed that the score errors $\E_{U(\vtx)}[D_{P}(\vtx,\vty)]$ and $\E_{U(\vtx)}[D_{L}(\vtx,\vty)]$ are the lowest when $L_\mathrm{Total}$ is adopted. The experimental results therefore validate the effectiveness of the proposed loss $L_\mathrm{Total}$.

\subsubsection{The Experiments on the Balancing Coefficient}
\label{apx:additional_exp:balancing_coefficient}
Table~\ref{tab:lambda} demonstrates the evaluation results on FID, IS, and P / R / D / C metrics using $L_\mathrm{Total}$ with different values of $\lambda$. It is observed that the evaluation results only slightly fluctuate on the six metrics with different choices of $\lambda$. The results thus experimentally validate that $L_\mathrm{Total}$ is less sensitive to the choice of $\lambda$. 
\begin{table}[t]
\renewcommand{\arraystretch}{1.05}
\newcommand{\boldtoprule}{\toprule[1.2pt]}
\centering
\footnotesize
\caption{The experimental results of our method using $L_\mathrm{Total}$ with different value of $\lambda$. The arrow symbols $\uparrow / \downarrow$ represent that a higher / lower evaluation result correspond to a better performance.}
\begin{tabular}{c|c|cccc}
    \boldtoprule
                 && $\lambda=2.0$ &  $\lambda=1.0$ &  $\lambda=0.5$ & $\lambda=0.125$ \\ 
     \hline
    FID          & $\downarrow$ &  2.33  &  \textbf{2.25}  &  2.36  &  \textbf{2.25}  \\
    IS           & $\uparrow$ &  9.90  &  9.90  &  \textbf{9.98}  &  9.79  \\
    \hline
    Precision    & $\uparrow$ &  0.64  &  \textbf{0.65}  &  0.64  &  0.64  \\
    Recall       & $\uparrow$ &  \textbf{0.62}  &  \textbf{0.62}  &  \textbf{0.62}  &  \textbf{0.62}  \\
    Density      & $\uparrow$ &  0.94  &  \textbf{0.96}  &  0.95  &  0.94  \\
    Coverage     & $\uparrow$ &  0.79  &  \textbf{0.81}  &  0.79  &  0.78  \\
    \boldtoprule
\end{tabular}
\label{tab:lambda}
\end{table}

\begin{table}[t]
    \renewcommand{\arraystretch}{1.1}
    \newcommand{\boldtoprule}{\toprule[1.2pt]}
    \centering
    \footnotesize
    \caption{A comparison of the evaluation results of the base method, the scaling method with different $\alpha$, ours method, and ours + scaling method with different $\alpha$ on the Cifar-10 dataset. The arrow symbols $\uparrow / \downarrow$ represent that a higher / lower evaluation result correspond to a better performance.}
    \resizebox{\linewidth}{!}{%
    \begin{tabular}{l|c|c|ccccc|c|ccccc}
        \boldtoprule
                             &              &  Base &  \multicolumn{5}{c|}{Scaling}                   &  Ours  &  \multicolumn{5}{c}{Ours + Scaling}   \\ 
        \hline
        $\alpha$             & -            &  1.0        &  0.5    &  3.0    &  5.0    &  10.0   &  20.0   &  1.0     &  0.5    &  3.0    &  5.0    &  10.0   &  20.0  \\
        \hline
        FID                  & $\downarrow$ &  4.10     &  3.22   &  6.16   &  8.06   &  12.48  &  16.59  &  \textbf{2.25}  &  2.57   &  3.96   &  6.40   &  11.06  & 14.92\\
        IS                   & $\uparrow$   &  9.08     &  9.50   &  9.14   &  9.38   &  9.37   &  9.23   &  9.90  &  9.79   &  \textbf{10.00}  &  9.95   &  9.73   & 9.53\\
        \hline
        Precision            & $\uparrow$   &  0.67     &  0.65   &  0.70   &  0.72   & \textbf{0.75}   &  \textbf{0.75}   &  0.65  &  0.62   &  0.69   &  0.71   &  0.73  &  0.74\\
        Recall               & $\uparrow$   &  0.61     &  0.61   &  0.56   &  0.53   &  0.49   &  0.44   &  0.62  &  \textbf{0.63}   &  0.57   &  0.53   &  0.49  &  0.44\\
        Density              & $\uparrow$   &  1.05     &  0.98   &  1.17   &  1.25   &  \textbf{1.36}   &  \textbf{1.36}   &  0.96  &  0.87   &  1.15   &  1.23   &  1.28  &  1.30\\
        Coverage             & $\uparrow$   &  0.80     &  0.78   &  0.78   &  0.77   &  0.75   &  0.67   &  \textbf{0.81}  &  0.77   &  0.79   &  0.77   &  0.71  &  0.66\\
        \hline \hline
        (CW) Precision         & $\uparrow$   &  0.51     &  0.43   &  0.63   &  0.67   &  0.70   &  \textbf{0.73}   &  0.56  &  0.48   &  0.65   &  0.68   &  0.71  &  0.72\\
        (CW) Recall            & $\uparrow$   &  0.59     &  0.62   &  0.51   &  0.46   &  0.42   &  0.37   &  0.61  &  \textbf{0.63}   &  0.55   &  0.50   &  0.45  &  0.39\\
        (CW) Density           & $\uparrow$   &  0.63     &  0.46   &  0.97   &  1.10   &  1.23   &  \textbf{1.30}   &  0.76  &  0.58   &  1.05   &  1.16   &  1.23  &  1.26\\
        (CW) Coverage          & $\uparrow$   &  0.60     &  0.47   &  0.69   &  0.69   &  0.66   &  0.61   &  0.71  &  0.61   &  \textbf{0.75}   &  0.73   &  0.66  &  0.61\\
        \boldtoprule
    \end{tabular}}
    \label{tab:alpha}
\end{table}
\subsubsection{The Experiments on the Scaling Technique}
\label{apx:additional_exp:scaling_factor}
Table~\ref{tab:alpha} demonstrates the evaluation results of the base method, the scaling method using different values of $\alpha$, our method, and our method with the scaling technique using different values of $\alpha$ (i.e., `ours + scaling') on the Cifar-10 dataset. It is observed that the performance of the `scaling method' and `ours + scaling' method on the fidelity metrics, i.e., precision and density, improves as the value of $\alpha$ grows. Conversely, the performance of the `scaling method' and `ours + scaling' method on the FID, recall, and coverage metrics degrades as the value of $\alpha$ grows. This results thus experimentally demonstrate the effects of the scaling technique on the performance of the base method and our method.

\subsubsection{Uncurated Examples} 
\label{apx:additional_exp:uncurrated}
Figs.~\ref{fig:uncurrated_cifar10_standard}, \ref{fig:uncurrated_cifar10_scaled}, \ref{fig:uncurrated_cifar10_ours}, and \ref{fig:uncurrated_cifar100} depict a few uncurated examples that qualitatively compare the effectiveness of different methods.

\begin{figure*}[t]
    \centering
    \includegraphics[width=0.87\linewidth]{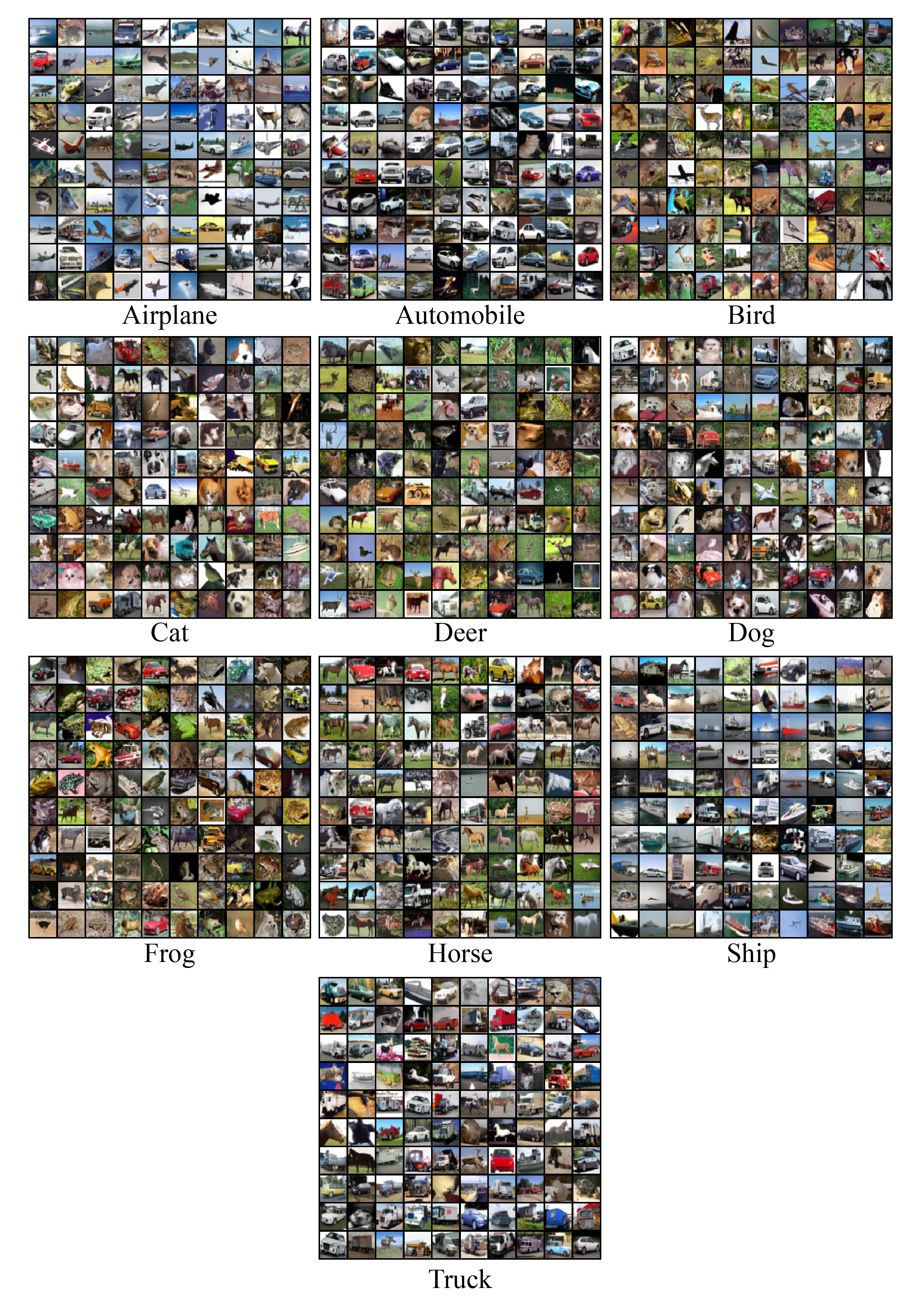}
    \vspace{0em}
    \caption{The uncurated examples generated using the base method on the Cifar-10 dataset.}
    \label{fig:uncurrated_cifar10_standard}
\end{figure*}

\begin{figure*}[t]
    \centering
    \includegraphics[width=0.87\linewidth]{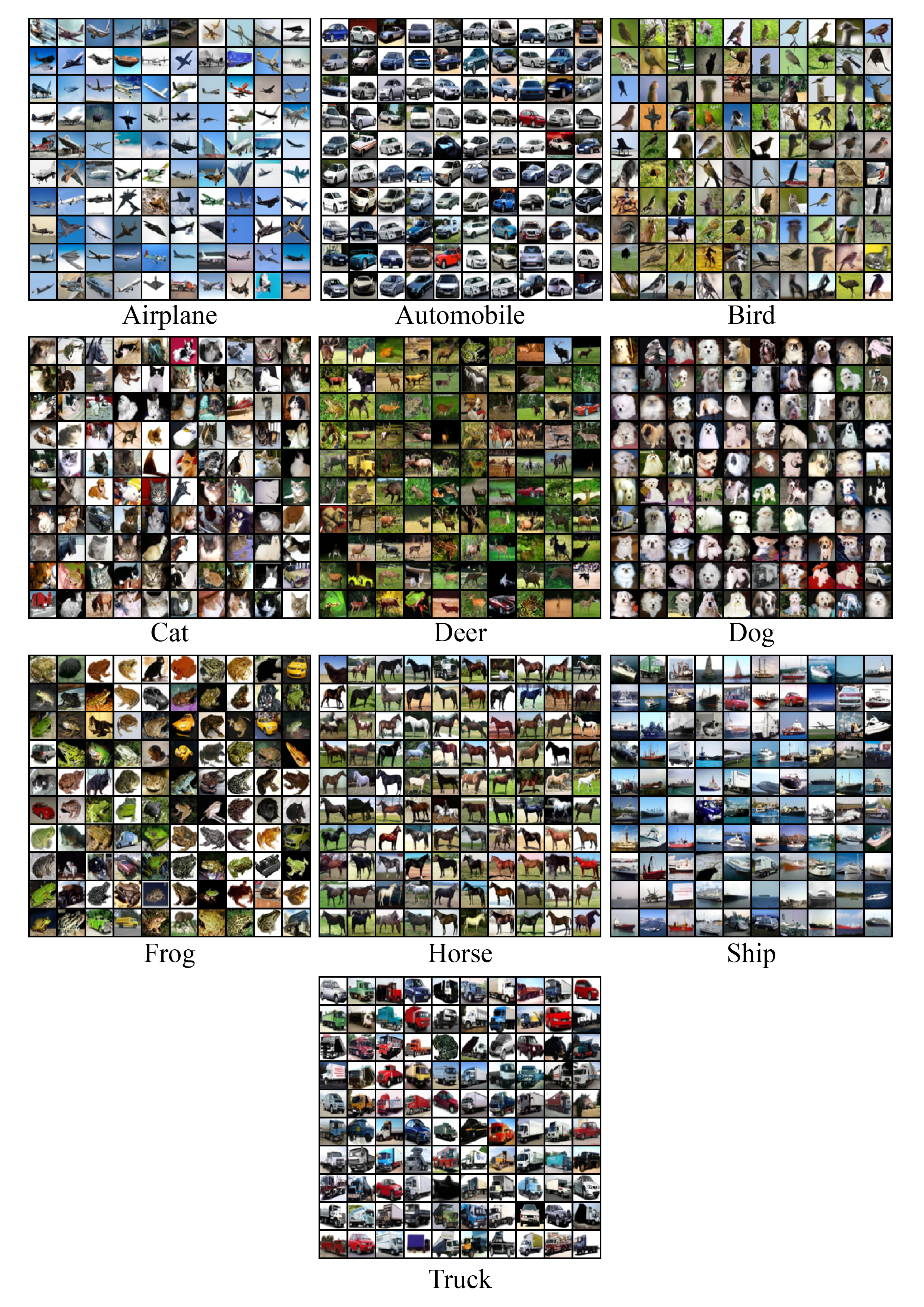}
    \vspace{0em}
    \caption{The uncurated examples generated using the scaling method on the Cifar-10 dataset.}
    \label{fig:uncurrated_cifar10_scaled}
\end{figure*}

\begin{figure*}[t]
    \centering
    \includegraphics[width=0.8\linewidth]{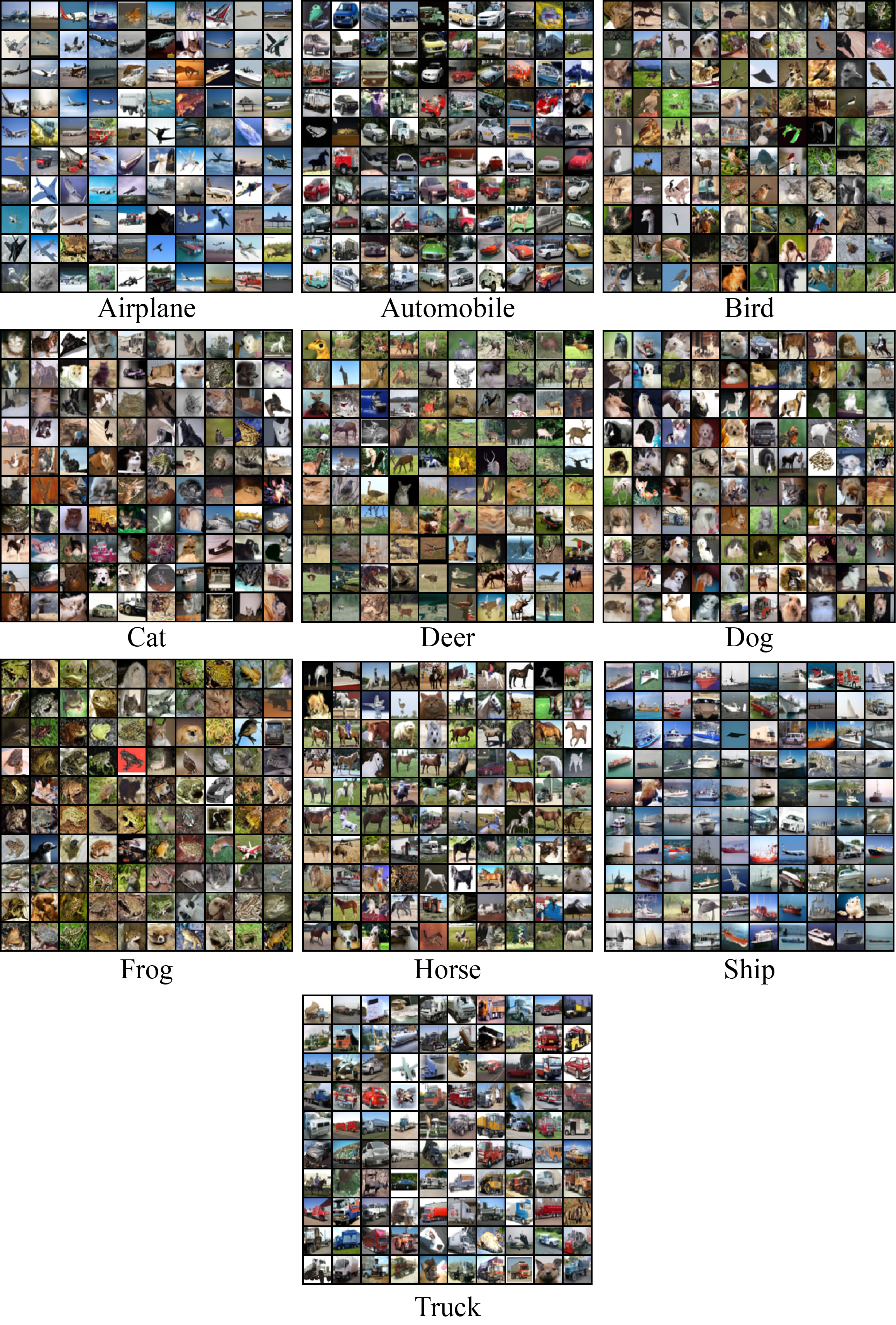}
    \vspace{0em}
    \caption{The uncurated examples generated using our method on the Cifar-10 dataset.}
    \label{fig:uncurrated_cifar10_ours}
\end{figure*}

\begin{figure*}[t]
    \centering
    \includegraphics[width=0.87\linewidth]{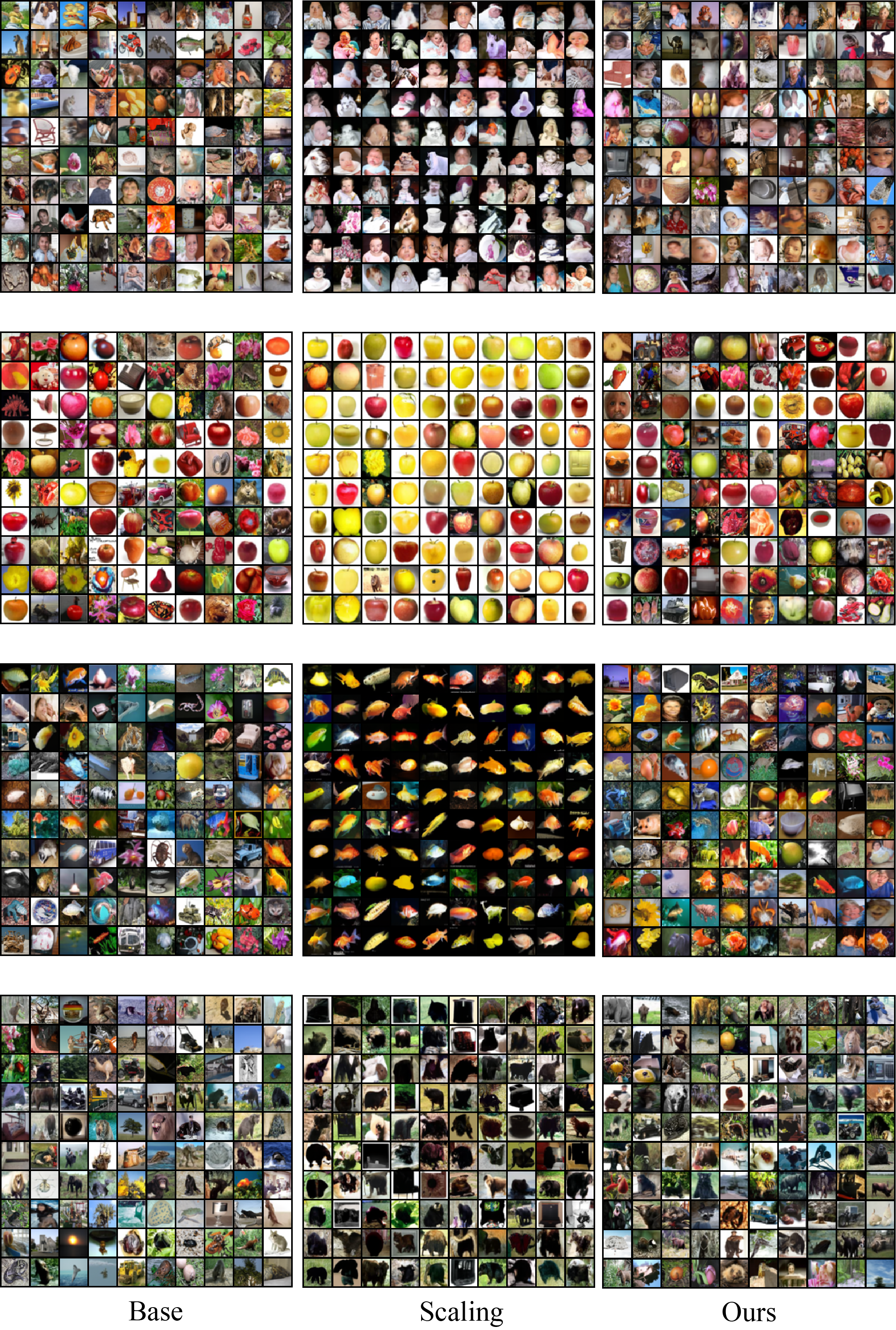}
    \vspace{0em}
    \caption{The uncurated examples of classes `baby', `apple', `aquarium fish', and `beargenerated' using the base method, the scaling method, and our method on the Cifar-100 dataset.}
    \label{fig:uncurrated_cifar100}
\end{figure*}

\end{document}